\documentclass{article}

\usepackage[preprint]{neurips_2026}

\usepackage[utf8]{inputenc} 
\usepackage[T1]{fontenc}    
\usepackage{hyperref}       
\usepackage{url}            
\usepackage{booktabs}       
\usepackage{amsfonts}       
\usepackage{nicefrac}       
\usepackage{microtype}      
\usepackage{xcolor}         
\usepackage[table]{xcolor}

\usepackage{array,tabularx}
\usepackage{makecell}
\usepackage{booktabs,caption}
\usepackage{multicol} 
\usepackage{multirow}
\usepackage[flushleft]{threeparttable}
\usepackage[most]{tcolorbox}

\usepackage{graphicx}
\usepackage{subcaption}
\usepackage{wrapfig}

\usepackage{algorithm}
\usepackage{algorithmic}

\usepackage{dsfont}
\usepackage{amsmath}
\usepackage{amssymb}
\usepackage{mathtools}
\usepackage{amsthm}

\theoremstyle{plain}
\newtheorem{theorem}{Theorem}[section]
\newtheorem{proposition}[theorem]{Proposition}
\newtheorem{lemma}[theorem]{Lemma}
\newtheorem{corollary}[theorem]{Corollary}
\theoremstyle{definition}
\newtheorem{definition}[theorem]{Definition}

\theoremstyle{remark}

\title{Delayed Homomorphic Reinforcement Learning \\for Environments with Delayed Feedback}

\author{%
  Jongsoo Lee \\
  Department of Convergence IT Engineering\\
  POSTECH\\  
  Pohang, 37673, South Korea\\
  \texttt{jjongs@postech.ac.kr} \\
  \And
  Jangwon Kim \\
  Department of Convergence IT Engineering\\
  POSTECH\\  
  Pohang, 37673, South Korea\\
  \texttt{jangwonkim@postech.ac.kr} \\
  \AND
  Soohee Han\thanks{Corresponding Author} \\
  Department of Electrical Engineering\\
  POSTECH\\
  Pohang, 37673, South Korea\\
  \texttt{sooheehan@postech.ac.kr} \\
}

\begin{document}

\maketitle

\begin{abstract}
    Reinforcement learning in real-world systems often involves delayed feedback, which breaks the Markov assumption and impedes both learning and control. Canonical augmentation-based approaches cause state-space explosion, which imposes a severe sample-complexity burden. Despite recent progress, state-of-the-art augmentation-based baselines either mainly alleviate the burden on the critic or rely on non-unified treatments for the actor and critic. In this study, we propose delayed homomorphic reinforcement learning (DHRL), a framework grounded in MDP homomorphisms that defines a belief-equivalence relation over the augmented state space to collapse control-redundant augmented states. In principle, this yields exact abstraction under deterministic dynamics and approximate abstraction under stochastic dynamics, enabling both the actor and critic to benefit from a structured abstraction mechanism. In finite domains, exact abstraction preserves optimality and recovers the delay-free sample-complexity order, whereas approximate abstraction admits a value-loss bound on the resulting policy. For continuous domains, we introduce deep delayed homomorphic policy gradient (D$^2$HPG), a deep actor-critic instantiation of the DHRL framework. Experiments on continuous-control tasks in MuJoCo show that D$^2$HPG outperforms strong augmentation-based baselines.     
\end{abstract}

\section{Introduction}

Despite the remarkable successes of reinforcement learning (RL) in pivotal domains \citep{dqn, RLHF, tokamak, parkour2, building}, sequential decision making in real-world systems is often accompanied by unavoidable delays arising from sensing, actuation, and communication latencies \citep{NCS, robotic-delay, kaufmann}. Such delays break the delay-free interaction assumption inherent in standard Markov decision processes (MDPs) \citep{MDP}, inducing non-Markovian dynamics that impede learning and destabilize behavior at inference time \citep{hwangbo, mahmood}. 

A canonical approach to compensating for delayed effects is state augmentation, which incorporates action histories into the state to restore Markovian dynamics \citep{augmented2}. While theoretically well-founded, this approach substantially enlarges the state space and thereby incurs a severe sample-complexity burden \citep{acting, VDPO}. Although several remedies have been proposed, the state-of-the-art augmentation-based baselines either predominantly reduce the burden on the critic or rely on non-unified treatments for the actor and critic \citep{BPQL, signal, AD-SAC}. This motivates the need for a unified and structured solution in which both the actor and critic can be trained without suffering from state-space explosion.

In this study, we propose delayed homomorphic reinforcement learning (DHRL), an RL framework grounded in MDP homomorphisms that defines a belief-equivalence relation over the augmented state space to collapse control-redundant augmented states \citep{symmetry, approximate-MDP}. In principle, this yields exact abstraction under deterministic dynamics and approximate abstraction under stochastic dynamics, allowing both the actor and critic to benefit from a structured abstraction mechanism. In finite domains, the exact abstraction preserves optimality and matches the sample-complexity order of the corresponding delay-free environment, whereas the approximate abstraction admits a value-loss bound for the resulting policy. For continuous domains, we introduce a deep actor-critic instantiation of the DHRL framework, termed deep delayed homomorphic policy gradient (D$^2$HPG), based on the stochastic homomorphic policy gradient theorem \citep{HPG2}. Empirical results on continuous-control tasks in MuJoCo \citep{mujoco} demonstrate that D$^2$HPG outperforms strong augmentation-based baselines, particularly under long-delay regimes.

\section{Preliminaries}

\subsection{Delayed Reinforcement Learning}
A finite MDP is defined as $\mathcal{M} = (\mathcal{S}$, $\mathcal{A}$, $\Psi$, $\mathcal{P}$, $\mathcal{R})$, where $\mathcal{S}$ and $\mathcal{A}$ are the finite set of states and actions, $\Psi \subseteq \mathcal{S} \times \mathcal{A}$ is the set of admissible state-action pairs, $\mathcal{P}: \Psi \times \mathcal{S} \rightarrow [0, 1]$ is the transition kernel, and $\mathcal{R}: \Psi \rightarrow \mathbb{R}$ is the reward function. A policy $\pi: \Psi \rightarrow [0,1]$ maps the state-to-action distribution. We assume that the state-dependent action set is identical across all states, i.e, $\mathcal{A}_s = \mathcal{A}$, where $\mathcal{A}_s = \{a \mid (s, a) \in \Psi\}$. At each discrete time step $t$, the RL agent observes a state $s_t \in \mathcal{S}$ from the environment, selects an action $a_t \in \mathcal{A}$ according to $\pi$, receives a reward $r_t = \mathcal{R}(s_t, a_t)$, and then observes the next state $s_{t+1} \in \mathcal{S}$. The agent repeats this process to find an optimal policy $\pi^{*}$ that maximizes the expected return. The value functions are then defined as:
\begin{align}
\nonumber
V^{\pi}_\mathcal{M}(s) := \mathbb{E}_{\pi}\left[\sum_{k=0}^{\infty}\gamma^{k}r_{t+k}  \mid s_{t}=s \right], \quad Q^{\pi}_\mathcal{M}(s, a) :=  \mathbb{E}_{\pi}\left[\sum_{k=0}^{\infty}\gamma^{k}r_{t+k}  \mid s_{t}=s,~a_{t}=a\right],
\end{align} where $\gamma \in [0, 1)$ is a discount factor, $V_\mathcal{M}^{\pi}(s)$ is a state-value function on $\mathcal{M}$ denoting the expected return starting from state $s$ under the policy $\pi$, and $Q_\mathcal{M}^{\pi}(s, a)$ is an action-value function on $\mathcal{M}$ representing the expected return starting from state $s$, taking action $a$, and thereafter following $\pi$.

A delayed MDP is defined as $\mathcal{M}^+= (\mathcal{M}, \Delta)$, where $\Delta \in \mathbb{N}$ denotes the fixed delay. This formulation can be reduced to a regular MDP $\mathcal{M}_\Delta = (\mathcal{X}$, $\mathcal{A}$, $\Psi_\Delta$, $\mathcal{P}_\Delta$, $\mathcal{R}_\Delta)$, where $\mathcal{X} = \mathcal{S} \times \mathcal{A}^{\Delta}$ is the finite set of augmented states, $\Psi_\Delta \subseteq \mathcal{X} \times \mathcal{A}$ is the set of admissible augmented state-action pairs, and $\mathcal{P}_\Delta: \Psi_\Delta \times \mathcal{X} \rightarrow [0, 1]$ and $\mathcal{R}_\Delta: \Psi_\Delta \rightarrow \mathbb{R}$ are the augmented transition kernel and augmented reward function. A regular policy $\pi_\Delta : \Psi_\Delta \rightarrow [0, 1]$ maps the augmented state-to-action distribution. We assume that the augmented state-dependent action set is identical across all augmented states, i.e, $\mathcal{A}_x = \mathcal{A}$, where $\mathcal{A}_x = \{a \mid (x, a) \in \Psi_\Delta \}$. In regular MDP, the augmented state $ x_t \in \mathcal{X}$ is defined as 
\begin{align}
 \nonumber
 x_{t} := (s_{t-\Delta}, a_{t-\Delta}, a_{t-\Delta+1}, \dots, a_{t-1}), \quad \forall t >\Delta
\end{align} which is composed of the last observed state and the most recent $\Delta$ actions. The augmented transition kernel is then defined as
\begin{align}
    \nonumber
    \mathcal{P}_\Delta(x_{t+1} \mid x_t, a_t) := \mathcal{P}(s_{t-\Delta+1} \mid s_{t-\Delta}, a_{t-\Delta})\delta_{a_t}(a'_t)\prod^{\Delta-1}_{i=1}\delta_{a_{t-i}}(a'_{t-i}),
\end{align} where $\delta$ denotes the Kronecker-delta function. Given $x_{t}$, the state $s_t$ is inferred through a belief
\begin{align}
\nonumber
b_\Delta(s_t \mid x_t) := \int_{\mathcal{S}^{\Delta-1}}\mathcal{P}(s_t \mid s_{t-1}, a_{t-1}) \prod^{t-2}_{i=t-\Delta}\mathcal{P}(s_{i+1} \mid s_{i}, a_{i})ds_{i+1},
\end{align} which represents the probability of being in $s_t$ given $x_t$. In addition, the augmented reward function is defined as the expectation under this belief
\begin{align}
\nonumber
\mathcal{R}_\Delta(x_t, a_t) :=  \mathbb{E}_{b_\Delta(\cdot \mid x_t)}\big[\mathcal{R}(s_t, a_t)\big],    
\end{align} where $\tilde{r}_t = \mathcal{R}_\Delta(x_t, a_t)$. Importantly, the regular MDP defined over the augmented state space $\mathcal{X}$ is a delay-free equivalent to the delayed MDP that preserves optimality, thereby enabling the direct application of standard RL algorithms \citep{augmented2, augmented}. 

Although delays may arise at multiple points in the agent-environment interaction, only their total amount matters for decision-making \citep{signal}. This observation allows us to treat different delay sources equivalently and simplifies the analysis. Accordingly, we concentrate on the observation delay without loss of generality and assume that the reward feedback arrives concurrently with the state feedback, so that the agent does not learn from partial information \citep{BPQL}.

\subsection{Finite MDP Homomorphism}


A finite MDP homomorphism~\citep{symmetry} is a surjection from an MDP $\mathcal{M}$ onto an abstract MDP $\bar{\mathcal{M}}$ that preserves the dynamics of $\mathcal{M}$ while collapsing control-redundant state-action distinctions. In particular, the finite MDP homomorphism is defined as follows.
\begin{definition}
A finite MDP homomorphism $h_s= \big(f_s, g_s)$ from an MDP $\mathcal{M} = (\mathcal{S}, \mathcal{A}, \Psi, \mathcal{P}, \mathcal{R})$ onto an abstract MDP $\bar{\mathcal{M}} = (\bar{\mathcal{S}}, \bar{\mathcal{A}}, \bar{\Psi}, \bar{\mathcal{P}}, \bar{\mathcal{R}})$ is a pair of surjective maps, where $f_s: \mathcal{S} \rightarrow \bar{\mathcal{S}}$ and $g_s: \mathcal{A}_s \rightarrow \bar{\mathcal{A}}_{f_s(s)}$ satisfy
\begin{align}
     \nonumber
     \bar{\mathcal{R}}\left(f_s(s), g_s(a)\right) = \mathcal{R}(s, a), \quad \bar{\mathcal{P}}\left(f_s(s') \mid f_s(s), g_s(a)\right) = \sum_{s'' \in G_s}\mathcal{P}(s'' \mid s, a),  \label{relation}
\end{align} for all $s, s' \in \mathcal{S}, a \in \mathcal{A}_s$, where $G_s$ is the block in $B|\mathcal{S}$ to which $s'$ belongs, $B$ is the partition of $\Psi$ induced by the equivalence relation under $h_s$, and $B|\mathcal{S}$ is the projection of $B$ onto $\mathcal{S}$. Here, the abstract MDP $\bar{\mathcal{M}}$ is called the homomorphic image of $\mathcal{M}$.
\end{definition} 

The finite MDP homomorphism yields optimal value equivalence and preserves optimal policies. Concretely, for any $(s,a) \in \Psi$, it satisfies $Q_\mathcal{M}^{*}(s,a) = Q_{\bar{\mathcal{M}}}^{*}\big(f_s(s),g_s(a)\big)$ and an analogous equivalence holds for the state-value functions. Moreover, any policy $\bar{\pi}$ on $\bar{\mathcal{M}}$ can be lifted to $\mathcal{M}$ via
\begin{align}
    \nonumber
    \pi^{\uparrow}(a \mid s)  = \frac{\bar{\pi}\big(\bar{a} \mid f_s(s) \big)}{|g^{-1}_s(\bar{a})|}, 
\end{align}for any $s \in \mathcal{S}$, $a \in g^{-1}_s(\bar{a})$, and $\bar{a} \in \bar{\mathcal{A}}_{f_s(s)}$, where $g^{-1}_s(\bar{a})$ denotes the set of actions that have the same image $\bar{a}$ under $g_s$, and $\pi^{\uparrow}$ denotes the lifted policy on $\mathcal{M}$. Crucially, this policy lifting preserves optimality, thus justifying learning policies in the abstract MDP. To improve sample efficiency, it is therefore desirable to identify the most compact homomorphic image of a given MDP.

As shown in \citet{symmetry}, if a partition $B$ of $\Psi$ is reward-respecting and satisfies the stochastic substitution property (SSP), then the quotient MDP $\mathcal{M}/B$ can be constructed, and there exists a homomorphism $h_s:\mathcal{M}\to \mathcal{M}/B$. Therefore, an appropriately chosen partition $B$ yields a quotient MDP that provides a useful abstraction of $\mathcal{M}$. In this study, we thus seek a reward-respecting SSP partition $B$ of $\Psi$ that induces a desired abstract MDP $\bar{\mathcal{M}}:= \mathcal{M}/B$.

\section{Delayed Homomorphic Reinforcement Learning}

Despite its solid theoretical foundation, state augmentation inevitably increases sample complexity as the augmented state space $\mathcal{X}$ grows exponentially in $\Delta$. To address this limitation, we propose delayed homomorphic reinforcement learning (DHRL), a framework grounded in MDP homomorphisms that enables sample-efficient learning on regular MDPs in a structured manner. In this section, we define a belief-equivalence relation and show that it induces exact abstraction over the augmented state space under deterministic dynamics, while its relaxation yields approximate abstraction under stochastic dynamics. Detailed proofs of the theoretical results are deferred to Appendix~\ref{Appendix-A}.

\subsection{Exact Abstraction under Belief-equivalence} \label{exact}

We first consider deterministic dynamics, under which exact abstraction of the regular MDP becomes possible. This setting serves as a special case of the more general stochastic formulation introduced in the following subsection. We begin by formalizing the finite MDP homomorphism for delayed RL.

\begin{definition}\label{def3.1}
A finite MDP homomorphism $h_x = (f_x, g_x)$ from a regular MDP $\mathcal{M}_\Delta = (\mathcal{X}$, $\mathcal{A}$, $\Psi_\Delta$, $\mathcal{P}_\Delta$, $\mathcal{R}_\Delta)$ onto an abstract MDP $\bar{\mathcal{M}}_\Delta = (\bar{\mathcal{X}}, \bar{\mathcal{A}}, \bar{\Psi}_\Delta, \bar{\mathcal{P}}_\Delta, \bar{\mathcal{R}}_\Delta)$ is a pair of surjective maps, where $f_x:\mathcal{X}\to \bar{\mathcal{X}}$ and $g_x:\mathcal{A}_x\to \bar{\mathcal{A}}_{f_x(x)}$ satisfy
\begin{align}
    \nonumber
    \bar{\mathcal{R}}_\Delta\left(f_x(x), g_x(a)\right) = \mathcal{R}_\Delta(x, a), \quad 
    \bar{\mathcal{P}}_\Delta\left(f_x(x') \mid f_x(x), g_x(a)\right) = \sum_{x'' \in G_x}\mathcal{P}_\Delta(x'' \mid x, a),
\end{align}
for all $x, x' \in \mathcal{X}$ and $a \in \mathcal{A}_x$, where $G_x$ is the block in $B_\Delta|\mathcal{X}$ that contains $x'$, $B_\Delta$ is the partition of $\Psi_\Delta$ induced by the equivalence relation under $h_x$, and $B_\Delta|\mathcal{X}$ denotes the projection of $B_\Delta$ onto $\mathcal{X}$. Here, the abstract MDP $\bar{\mathcal{M}}_\Delta$ is called the homomorphic image of $\mathcal{M}_\Delta$.
\end{definition}


To enable sample-efficient learning, we seek a compact homomorphic image $\bar{\mathcal{M}}_\Delta$ by identifying a reward-respecting SSP partition $B_\Delta$ of $\Psi_\Delta$, which induces the abstract MDP $\bar{\mathcal{M}}_\Delta:= \mathcal{M}_\Delta/B_\Delta$. To this end, we define a belief-equivalence relation over the augmented state space $\mathcal{X}$.

\begin{definition}\label{b-equivalent}
Let $x, x' \in \mathcal{X}$ be two augmented states. We say that $x$ and $x'$ are belief-equivalent if they induce the same belief over the underlying state $s \in \mathcal{S}$, i.e.,
\begin{align}
    \nonumber
    b_\Delta(\cdot \mid x) = b_\Delta(\cdot \mid x').
\end{align} 
\end{definition}

Intuitively, if two augmented states induce the same belief over the underlying state, then keeping them separate does not provide additional control-relevant information. This suggests that they may be merged into the same abstract state. Based on this relation, the following proposition holds.

\begin{proposition} \label{proposition-1}
Let $B_\Delta$ be the partition of $\Psi_\Delta$ induced by the belief-equivalence relation on $\mathcal{X}$. If the dynamics are deterministic, then $B_\Delta$ is reward-respecting and satisfies SSP.
\end{proposition}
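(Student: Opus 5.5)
Under deterministic dynamics, write $\mathcal{P}(s'\mid s,a)=\delta_{T(s,a)}(s')$ for a transition map $T:\mathcal{S}\times\mathcal{A}\to\mathcal{S}$. For $x=(s_{t-\Delta},a_{t-\Delta},\dots,a_{t-1})$ the belief $b_\Delta(\cdot\mid x)$ is then a point mass at the rolled-out state
\begin{align}
\nonumber
\phi(x):=T\big(\cdots T(T(s_{t-\Delta},a_{t-\Delta}),a_{t-\Delta+1})\cdots,a_{t-1}\big).
\end{align}
Belief-equivalence of $x$ and $x'$ therefore reduces to $\phi(x)=\phi(x')$. I take $B_\Delta$ to be the partition of $\Psi_\Delta$ whose blocks are $\{(x,a): \phi(x)=s\}$ for each fixed pair $(s,a)$. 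In other words, $(x,a)\equiv(x',a')$ iff $\phi(x)=\phi(x')$ and $a=a'$. Its projection $B_\Delta|\mathcal{X}$ consists of the level sets $\phi^{-1}(s)$.

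Reward-respecting is immediate. If $\phi(x)=\phi(x')=s$, then
\begin{align}
\nonumber
\mathcal{R}_\Delta(x,a)=\mathbb{E}_{b_\Delta(\cdot\mid x)}[\mathcal{R}(\cdot,a)]=\mathcal{R}(s,a)=\mathcal{R}_\Delta(x',a).
\end{align}

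For SSP, I need $\sum_{x''\in G}\mathcal{P}_\Delta(x''\mid x,a)=\sum_{x''\in G}\mathcal{P}_\Delta(x''\mid x',a)$ for every block $G=\phi^{-1}(\sigma)$. Under deterministic $\mathcal{P}$, the augmented kernel $\mathcal{P}_\Delta(\cdot\mid x,a)$ is a point mass at the shifted augmented state
\begin{align}
\nonumber
x^{+}=(T(s_{t-\Delta},a_{t-\Delta}),a_{t-\Delta+1},\dots,a_{t-1},a).
\end{align}
So the left-hand sum is the indicator $\mathbf{1}[\phi(x^{+})=\sigma]$. The key identity is that rolling out $x^{+}$ applies the same transitions as rolling out $x$, followed by one more step with $a$. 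Hence
\begin{align}
\nonumber
\phi(x^{+})=T(\phi(x),a).
\end{align}
This gives $\phi(x^{+})=T(\phi(x),a)=T(\phi(x'),a)=\phi(x'^{+})$, so both indicators agree for every block $G$, and SSP holds.

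The step needing the most care is bookkeeping. I must confirm that the paper's indexing of $\mathcal{P}_\Delta$ (the primed actions and the shift) matches $x^{+}$, so that the identity $\phi(x^{+})=T(\phi(x),a)$ is literally correct. I must also confirm that the blocks of $B_\Delta$ are stated as pairs with equal action; this is the natural choice, since $g_x$ can be taken as the identity. I would also note the edge case $\Delta=1$, where the belief integral is a single transition factor; the argument is unchanged there.
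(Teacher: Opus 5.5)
Your proof is correct and follows the paper's argument. Reward-respecting comes from equal beliefs giving equal expected rewards, and SSP holds because deterministic dynamics turn each block probability into an indicator, while belief-equivalence is preserved by the one-step shift. Your explicit identity $\phi(x^{+})=T(\phi(x),a)$ is the concrete form of the paper's belief-update step $\mathcal{F}(b_\Delta(\cdot\mid x),a)=b'_\Delta(\cdot\mid\tau_\Delta(x,a))$; it also makes clear why determinism is needed there, since under stochastic dynamics the belief at $x^{+}$ conditions on the newly revealed state rather than being a one-step prediction.
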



Proposition~\ref{proposition-1} implies that the homomorphic image $\bar{\mathcal{M}}_\Delta$ can be derived from the belief-induced partition $B_\Delta$ of $\Psi_\Delta$. Moreover, the following corollary shows that an optimal policy on $\bar{\mathcal{M}}_\Delta$ can be lifted back to the delayed MDP $\mathcal{M}^+$ without loss of optimality.

\begin{corollary}\label{corollary-1}
Let $\mathcal{M}^+$ denote a delayed MDP, let $\mathcal{M}_\Delta$ denote its regular reformulation, and let $\bar{\mathcal{M}}_\Delta$ denote the belief-induced homomorphic image of $\mathcal{M}_\Delta$. If the dynamics are deterministic, then an optimal policy on $\bar{\mathcal{M}}_\Delta$ can be lifted back to $\mathcal{M}^+$ without loss of optimality.
\end{corollary}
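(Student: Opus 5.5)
The argument chains three optimality-preserving reductions: from $\mathcal{M}^+$ to $\mathcal{M}_\Delta$, from $\mathcal{M}_\Delta$ to $\bar{\mathcal{M}}_\Delta$, and back up through policy lifting. First, Proposition~\ref{proposition-1} says that under deterministic dynamics the belief-induced partition $B_\Delta$ of $\Psi_\Delta$ is reward-respecting and satisfies SSP. By the result of \citet{symmetry} recalled in the preliminaries, the quotient $\bar{\mathcal{M}}_\Delta=\mathcal{M}_\Delta/B_\Delta$ is therefore well defined, and there is a homomorphism $h_x=(f_x,g_x)$ in the sense of Definition~\ref{def3.1}. Concretely, $f_x$ sends $x$ to its belief class, and $g_x$ can be taken as the identity on $\mathcal{A}$. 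The identity choice is justified because, under deterministic dynamics, the belief $b_\Delta(\cdot\mid x)$ is a point mass $\delta_{s(x)}$. Two actions are then merged only if they agree in reward and block transitions from every state in the class, so no nontrivial action collapsing is needed for the argument.

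Second, I invoke the standard homomorphism optimality results of \citet{symmetry} for $h_x$. These give $Q^*_{\mathcal{M}_\Delta}(x,a)=Q^*_{\bar{\mathcal{M}}_\Delta}(f_x(x),g_x(a))$ and $V^*_{\mathcal{M}_\Delta}(x)=V^*_{\bar{\mathcal{M}}_\Delta}(f_x(x))$. They also say that the lifted policy $\pi^\uparrow(a\mid x)=\bar\pi^*(\bar a\mid f_x(x))/|g_x^{-1}(\bar a)|$ is optimal on $\mathcal{M}_\Delta$ whenever $\bar\pi^*$ is optimal on $\bar{\mathcal{M}}_\Delta$. The only check needed is that $\pi^\uparrow$ is greedy with respect to $Q^*_{\mathcal{M}_\Delta}$:
\begin{align}
\nonumber
\sum_a \pi^\uparrow(a\mid x)\,Q^*_{\mathcal{M}_\Delta}(x,a)=\sum_{\bar a}\bar\pi^*(\bar a\mid f_x(x))\,Q^*_{\bar{\mathcal{M}}_\Delta}(f_x(x),\bar a)=V^*_{\bar{\mathcal{M}}_\Delta}(f_x(x))=V^*_{\mathcal{M}_\Delta}(x).
\end{align}
Policy improvement then gives $V^{\pi^\uparrow}_{\mathcal{M}_\Delta}=V^*_{\mathcal{M}_\Delta}$.

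Third, I transfer the result to $\mathcal{M}^+$. The preliminaries state (citing \citet{augmented2, augmented}) that $\mathcal{M}_\Delta$ is a delay-free equivalent of $\mathcal{M}^+$ that preserves optimality. In the delayed MDP the agent observes exactly $x_t=(s_{t-\Delta},a_{t-\Delta},\dots,a_{t-1})$ at time $t$, so $\pi^\uparrow$ is an implementable policy there. Its expected return in $\mathcal{M}^+$ equals its value in $\mathcal{M}_\Delta$, because $\mathcal{R}_\Delta$ is the belief-expectation of $\mathcal{R}$ and the tower property applies along trajectories. The optimal value over history-dependent delayed policies also equals $V^*_{\mathcal{M}_\Delta}$. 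Hence $\pi^\uparrow$ is optimal in $\mathcal{M}^+$.

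I expect the main obstacle to be this last equivalence step: stating precisely in what sense a policy on $\mathcal{M}_\Delta$ is a policy on $\mathcal{M}^+$, and why no information beyond $x_t$ can improve on it. I would handle it by appealing directly to the cited equivalence rather than reproving it. If needed, I would argue that $x_t$ is a sufficient statistic, since the conditional law of $s_t$ given the full observed history equals $b_\Delta(\cdot\mid x_t)$. The first two steps are essentially bookkeeping, given Proposition~\ref{proposition-1}.
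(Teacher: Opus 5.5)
Your proposal is correct and follows the same route as the paper's proof sketch. Proposition~\ref{proposition-1} supplies the homomorphism, Theorem~2 of \citet{symmetry} gives optimality of the lifted policy on $\mathcal{M}_\Delta$, and the optimality-preserving equivalence between $\mathcal{M}_\Delta$ and $\mathcal{M}^+$ \citep{augmented} transfers it to the delayed problem. Your greedy-policy verification and sufficient-statistic remark only spell out what the paper leaves to these citations. One small aside: taking $g_x$ to be the identity follows from $B_\Delta$ pairing each action only with itself, not from the point-mass property of the belief, though nothing in your argument depends on this.
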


\begin{proof}[Proof sketch]
By Theorem~2 in \citet{symmetry}, an optimal policy on the homomorphic image $\bar{\mathcal{M}}_\Delta$ preserves its optimality when lifted back to the regular MDP $\mathcal{M}_\Delta$. Since $\mathcal{M}_\Delta$ is a delay-free equivalent of the delayed MDP $\mathcal{M}^+$ that preserves optimality \citep{augmented}, the lifted policy remains optimal on $\mathcal{M}^+$. Thus, the corollary follows immediately.
\end{proof}

\begin{figure}[h] 
\begin{center}
\includegraphics[width=0.9\textwidth]{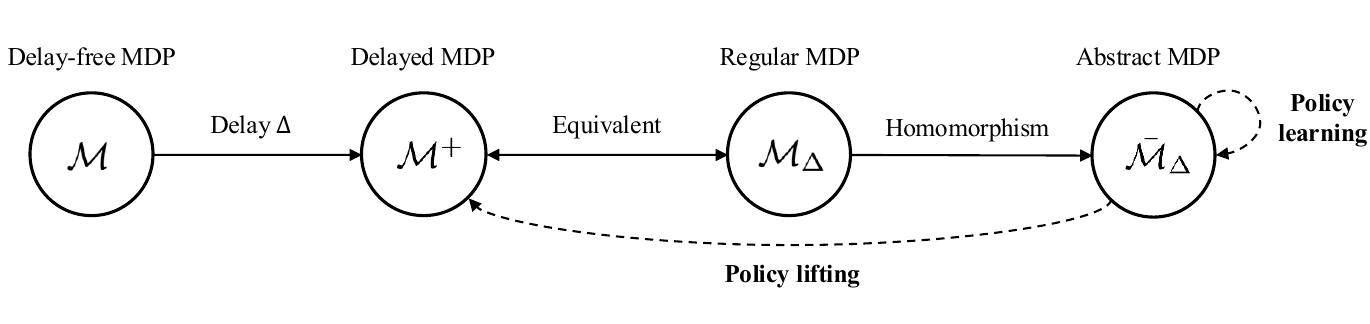}
\end{center}
\caption{Schematic overview of the DHRL framework.} 
\label{fig:framework0}
\end{figure}

Corollary~\ref{corollary-1} implies that an optimal policy for the delayed MDP $\mathcal{M}^+$ can be obtained by solving the abstract MDP $\bar{\mathcal{M}}_\Delta$ and lifting the resulting policy back onto the original delayed problem without loss of optimality. A schematic overview of the DHRL framework is presented in Fig.~\ref{fig:framework0}. We next analyze the state-space reducibility induced by the belief-equivalence relation over the augmented state space $\mathcal{X}$ and its implications for the sample-complexity bound.

\begin{proposition} \label{proposition-2}
Under deterministic dynamics, the compression ratio of the abstract state space $\bar{\mathcal{X}}$ induced by belief-equivalence on the augmented state space $\mathcal{X}$ satisfies
\begin{align}
    \nonumber
    \zeta := \frac{|\bar{\mathcal{X}}|}{|\mathcal{X}|} 
    \leq \frac{1}{|\mathcal{A}|^{\Delta}},
\end{align}
for any $\Delta \in \mathbb{N}$. In particular, this implies that $|\bar{\mathcal{X}}| \leq |\mathcal{S}|$.
\end{proposition}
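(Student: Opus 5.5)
Under deterministic dynamics the transition kernel $\mathcal{P}(\cdot \mid s,a)$ is a point mass at a unique successor, which we write $T(s,a)$. The plan is to show that the belief $b_\Delta(\cdot \mid x)$ is then a point mass on a single underlying state. From this, the abstract states are indexed by a subset of $\mathcal{S}$, and the ratio bound follows by counting.

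The first step is to unroll the belief for $x=(s_{t-\Delta},a_{t-\Delta},\dots,a_{t-1})$. Substituting point masses into the defining integral collapses each factor $\mathcal{P}(s_{i+1}\mid s_i,a_i)$ to $\delta_{T(s_i,a_i)}(s_{i+1})$. By induction on the number of composed steps, the integral over $\mathcal{S}^{\Delta-1}$ evaluates to
\begin{align}
\nonumber
b_\Delta(\cdot \mid x) = \delta_{\phi(x)}(\cdot), \qquad \phi(x) := T\big(\cdots T(T(s_{t-\Delta},a_{t-\Delta}),a_{t-\Delta+1})\cdots,a_{t-1}\big).
\end{align}
In the finite setting the integrals are sums, so this is just repeated evaluation of Kronecker deltas.

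The second step uses this to characterize the classes. Two point masses coincide if and only if their supports coincide. Hence $x$ and $x'$ are belief-equivalent if and only if $\phi(x)=\phi(x')$. The belief-equivalence classes are therefore exactly the nonempty fibres of $\phi:\mathcal{X}\to\mathcal{S}$. The abstract state space $\bar{\mathcal{X}}$ is in bijection with $\phi(\mathcal{X})\subseteq\mathcal{S}$, which gives $|\bar{\mathcal{X}}|\le|\mathcal{S}|$. The partition of $\Psi_\Delta$ from Proposition~\ref{proposition-1} projects onto $\mathcal{X}$ as exactly these fibres, since all actions are admissible everywhere ($\mathcal{A}_x=\mathcal{A}$).

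The third step is the count. Since $|\mathcal{X}|=|\mathcal{S}||\mathcal{A}|^{\Delta}$, we get
\begin{align}
\nonumber
\zeta=\frac{|\bar{\mathcal{X}}|}{|\mathcal{S}||\mathcal{A}|^{\Delta}}\le\frac{|\mathcal{S}|}{|\mathcal{S}||\mathcal{A}|^{\Delta}}=\frac{1}{|\mathcal{A}|^{\Delta}}
\end{align}
for every $\Delta\in\mathbb{N}$.

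The main obstacle is only notational. The paper writes the belief as a continuous integral with Kronecker deltas, so the first step needs care: interpret the integral as a sum over $\mathcal{S}^{\Delta-1}$, and check that for $\Delta=1$ the product is empty and $b_\Delta=\mathcal{P}(\cdot\mid s_{t-1},a_{t-1})$ directly. The rest is the counting argument above.
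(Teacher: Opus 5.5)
Your proof is correct and follows the same route as the paper's: under deterministic dynamics the belief collapses to a Dirac mass $\delta_{\phi(x)}$ (the paper's map $\mathcal{F}_\Delta$), so the belief classes are indexed by $\phi(\mathcal{X})\subseteq\mathcal{S}$. This gives $|\bar{\mathcal{X}}|\le|\mathcal{S}|$, and dividing by $|\mathcal{X}|=|\mathcal{S}||\mathcal{A}|^{\Delta}$ gives $\zeta\le 1/|\mathcal{A}|^{\Delta}$. You are in fact more explicit than the paper, since you write $\phi$ as the iterated transition map and check that the projection $B_\Delta|\mathcal{X}$ coincides with the fibres of $\phi$, both of which the paper simply asserts.
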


Proposition~\ref{proposition-2} implies that the abstract state space $\bar{\mathcal{X}}$ induced by belief-equivalence is no larger than the underlying state space $\mathcal{S}$, thereby directly yielding the following corollary.

\begin{corollary} \label{corollary-2}
    Under deterministic dynamics, the sample-complexity of $Q$-learning on the abstract MDP $\bar{\mathcal{M}}_\Delta$ is upper-bounded by 
    \begin{align}
        \nonumber
        O\Big(\frac{\log (\vert \mathcal{S}\vert\vert\mathcal{A} \vert)}{\varepsilon^{2.5}(1 - \gamma)^5}\Big),
    \end{align} which is of the same order as the sample-complexity bound for the delay-free MDP $\mathcal{M}$.
\end{corollary}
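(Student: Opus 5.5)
The plan is to combine Proposition~\ref{proposition-2} with a known finite-MDP sample-complexity bound for $Q$-learning, and then read off the order. The bound in the statement, $O\big(\log(|\mathcal{S}||\mathcal{A}|)/(\varepsilon^{2.5}(1-\gamma)^5)\big)$, has the form of the polynomial-stepsize asynchronous/synchronous $Q$-learning rate of Even-Dar and Mansour. That result bounds the number of iterations (samples per state-action pair, or with a covering assumption in the asynchronous case) needed to reach an $\varepsilon$-accurate $Q$-function. Its only dependence on the MDP's size is logarithmic in the number of state-action pairs. So I will first state this bound as a black box for a generic finite MDP with state set $\mathcal{Z}$ and action set $\mathcal{A}$: $O\big(\log(|\mathcal{Z}||\mathcal{A}|)/(\varepsilon^{2.5}(1-\gamma)^5)\big)$.

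Second, I will apply it to the abstract MDP $\bar{\mathcal{M}}_\Delta$. Proposition~\ref{proposition-1} guarantees that $\bar{\mathcal{M}}_\Delta=\mathcal{M}_\Delta/B_\Delta$ is a well-defined finite MDP, with reward $\bar{\mathcal{R}}_\Delta$ bounded by $\max|\mathcal{R}|$ since it is an expectation of $\mathcal{R}$. Its state set is $\bar{\mathcal{X}}$ and its action set $\bar{\mathcal{A}}$ is the image of $g_x$, so $|\bar{\mathcal{A}}|\le|\mathcal{A}|$. Running $Q$-learning on it therefore needs $O\big(\log(|\bar{\mathcal{X}}||\bar{\mathcal{A}}|)/(\varepsilon^{2.5}(1-\gamma)^5)\big)$ samples. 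Here I must check that interacting with $\mathcal{M}_\Delta$ and mapping each observed transition $(x,a,\tilde r,x')$ through $(f_x,g_x)$ produces samples distributed according to $\bar{\mathcal{P}}_\Delta$ and $\bar{\mathcal{R}}_\Delta$. This follows directly from the homomorphism conditions in Definition~\ref{def3.1}, and it makes the black-box bound applicable.

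Third, Proposition~\ref{proposition-2} gives $|\bar{\mathcal{X}}|\le|\mathcal{S}|$, so $\log(|\bar{\mathcal{X}}||\bar{\mathcal{A}}|)\le\log(|\mathcal{S}||\mathcal{A}|)$ by monotonicity of $\log$. This yields the stated bound. For comparison, the same theorem applied to the delay-free $\mathcal{M}$ gives exactly $O\big(\log(|\mathcal{S}||\mathcal{A}|)/(\varepsilon^{2.5}(1-\gamma)^5)\big)$, so the orders coincide. By contrast, $\mathcal{M}_\Delta$ itself would give $\log(|\mathcal{S}||\mathcal{A}|^{\Delta+1})$, which grows linearly in $\Delta$.

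The main obstacle is the second step, namely the legitimacy of treating mapped samples as i.i.d.\ draws from the abstract model. The reward $\tilde r_t$ observed in the delayed setting is the realized $\mathcal{R}(s_t,a_t)$, not its belief-expectation $\mathcal{R}_\Delta$. Under deterministic dynamics the belief is a point mass, so the two coincide and the issue disappears. I would state this explicitly. I would also note that the covering-time factor of the asynchronous version is left implicit, as in the delay-free bound being compared against.
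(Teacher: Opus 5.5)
Your proposal is correct and follows the paper's proof: apply the $O\big(\log(|\bar{\mathcal{X}}||\bar{\mathcal{A}}|)/(\varepsilon^{2.5}(1-\gamma)^5)\big)$ $Q$-learning bound to $\bar{\mathcal{M}}_\Delta$, then use $|\bar{\mathcal{X}}|\le|\mathcal{S}|$ from Proposition~\ref{proposition-2} and $|\bar{\mathcal{A}}|\le|\mathcal{A}|$ inside the logarithm. Your additional checks are sound refinements that the paper leaves implicit: that transitions mapped through $(f_x,g_x)$ follow $\bar{\mathcal{P}}_\Delta$, and that under Dirac beliefs the realized rewards coincide with $\mathcal{R}_\Delta$.
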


Here, the sample-complexity bound characterizes the number of samples required for $Q$-learning to attain an $\varepsilon$-optimal policy with high confidence \citep{Speed-QL}. Corollary~\ref{corollary-2} shows that the bound for learning on the abstract MDP does not explicitly depend on the delay $\Delta$, unlike the bound for learning directly on the regular MDP defined over the augmented state space $\mathcal{X}=\mathcal{S}\times\mathcal{A}^{\Delta}$. 

This observation suggests that the proposed abstraction removes the explicit $\Delta$-dependent sample-complexity burden induced by state-space explosion and recovers the delay-free sample-complexity order. Consequently, policy learning can be performed on the abstract MDP in a sample-efficient manner, and the resulting policy can be lifted back to the delayed MDP without loss of optimality, as guaranteed by Corollary~\ref{corollary-1}. In this sense, the proposed DHRL framework provides a principled approach to addressing the state-space explosion inherent in augmentation-based methods.

\subsection{Approximate Abstraction under Relaxed Belief-equivalence}

Under stochastic dynamics, the exact equality in Definition~\ref{b-equivalent} rarely holds. We therefore introduce a relaxed belief-based state-aggregation criterion and analyze the performance degradation induced by the resulting approximate abstraction. Specifically, for $\varepsilon \in [0,1)$, we consider an $\varepsilon$-abstract partition of the augmented state space $\mathcal{X}$ such that any two augmented states $x,x' \in \mathcal{X}$ belonging to the same abstract block satisfy
\begin{align}
    \nonumber
    \| b_\Delta(\cdot \mid x) - b_\Delta(\cdot \mid x') \|_{\mathrm{TV}} \le \varepsilon,
\end{align}
where $\|\cdot\|_{\mathrm{TV}}$ denotes the total variation distance. Let $\bar{\mathcal{X}}_\varepsilon$ denote the resulting abstract state space, let $\bar{\mathcal{M}}_{\varepsilon} = (\bar{\mathcal{X}}_\varepsilon,\bar{\mathcal{A}},\bar{\Psi}_{\varepsilon},\bar{\mathcal{P}}_{\varepsilon},\bar{\mathcal{R}}_{\varepsilon})$ be the approximate abstract MDP induced by this partition, and let $f_x:\mathcal{X}\to\bar{\mathcal{X}}_\varepsilon$ and $g_x:\mathcal{A}_x\to \bar{\mathcal{A}}_{f_x(x)}$ be the associated surjective state and action maps. For brevity, we defer the full definition of the approximate abstract MDP to Appendix~\ref{approximate-abstract}. Since this relaxation does not generally yield an exact reward-respecting SSP partition, we no longer claim the exact optimality-preserving guarantee as stated in Corollary~\ref{corollary-1}. Instead, we quantify the suboptimality of the lifted policy through the value-loss bound formalized in \citet{approximate-MDP}. 

\begin{lemma} \label{lemma:approx_perf}
Let $\bar{\pi}_{\varepsilon}^*$ be an optimal policy on $\bar{\mathcal{M}}_{\varepsilon}$, and let $(\bar{\pi}_{\varepsilon}^*)^\uparrow$ denote its lifted policy on $\mathcal{M}_\Delta$. Then the value-loss bound is given by
\begin{align}
\nonumber
\left\|V_{\mathcal{M}_\Delta}^* - V_{\mathcal{M}_\Delta}^{(\bar{\pi}_{\varepsilon}^*)^\uparrow}\right\|_\infty \leq \frac{2}{1-\gamma} \left(C_1 + \frac{\gamma \cdot \xi_{\varepsilon}}{2(1-\gamma)}C_2\right),
\end{align}
where 
\begin{align}    
    \nonumber
    C_1
    &=
    \max_{(x,a) \in \Psi_\Delta}
    \left|\mathcal{R}_\Delta(x,a)-\bar{\mathcal{R}}_{\varepsilon}(f_x(x),g_x(a))\right|, \\ \nonumber
    C_2
    &=
    \max_{(x,a) \in \Psi_\Delta}\sum_{\bar{x}' \in \bar{\mathcal{X}_\varepsilon}}
    \left|\sum_{y : f_x(y) = \bar{x}'}\mathcal{P}_\Delta(y \mid x,a)-\bar{\mathcal{P}}_{\varepsilon}(\bar{x}' \mid f_x(x),g_x(a))\right|,\\ \nonumber
    \xi_{\varepsilon} &= \max_{(\bar{x}, \bar{a}) \in \bar{\Psi}_\varepsilon} \bar{\mathcal{R}}_{\varepsilon}(\bar{x}, \bar{a}) - \min_{(\bar{x}, \bar{a}) \in \bar{\Psi}_\varepsilon} \bar{\mathcal{R}}_{\varepsilon}(\bar{x}, \bar{a}).
\end{align} 
Here, $C_1$ and $C_2$ denote the maximum reward and transition discrepancies between $\mathcal{M}_\Delta$ and $\bar{\mathcal{M}}_\varepsilon$, respectively, and $\xi_{\varepsilon}$ denotes the range of the abstract reward function.
\end{lemma}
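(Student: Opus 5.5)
The plan is to reproduce the standard approximate-homomorphism argument of \citet{approximate-MDP}, specialized to the abstraction $(f_x,g_x)$ of $\mathcal{M}_\Delta$ onto $\bar{\mathcal{M}}_\varepsilon$. First, I will lift the abstract optimal value function to $\mathcal{X}$ by setting $\tilde V(x):=V^*_{\bar{\mathcal{M}}_\varepsilon}(f_x(x))$, and likewise $\tilde Q(x,a):=Q^*_{\bar{\mathcal{M}}_\varepsilon}(f_x(x),g_x(a))$. Then I will bound $\|Q^*_{\mathcal{M}_\Delta}-\tilde Q\|_\infty$.

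To do this, I write both quantities as fixed points of their Bellman optimality equations and subtract. The reward terms differ by at most $C_1$. For the transition terms, I use $\tilde V(y)=V^*_{\bar{\mathcal{M}}_\varepsilon}(f_x(y))$ and group the sum $\sum_y \mathcal{P}_\Delta(y\mid x,a)\tilde V(y)$ by abstract block. This gives
\begin{align}
\nonumber
\gamma\sum_{\bar x'}\Big(\sum_{y:f_x(y)=\bar x'}\mathcal{P}_\Delta(y\mid x,a)-\bar{\mathcal{P}}_\varepsilon(\bar x'\mid f_x(x),g_x(a))\Big)V^*_{\bar{\mathcal{M}}_\varepsilon}(\bar x') .
\end{align}
The coefficients in the bracket sum to zero, because both are probability distributions over $\bar{\mathcal{X}}_\varepsilon$. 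I can therefore subtract the constant $(\max V^*_{\bar{\mathcal{M}}_\varepsilon}+\min V^*_{\bar{\mathcal{M}}_\varepsilon})/2$ from $V^*_{\bar{\mathcal{M}}_\varepsilon}$ without changing the value. After this shift, $|V^*_{\bar{\mathcal{M}}_\varepsilon}|$ is bounded by half its span, and the span is at most $\xi_\varepsilon/(1-\gamma)$. So the transition term is bounded by $\gamma C_2\,\xi_\varepsilon/(2(1-\gamma))$.

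The remaining term is $\gamma\sum_y\mathcal{P}_\Delta(y\mid x,a)\,(V^*_{\mathcal{M}_\Delta}(y)-\tilde V(y))$, which is at most $\gamma\|V^*_{\mathcal{M}_\Delta}-\tilde V\|_\infty\le\gamma\|Q^*_{\mathcal{M}_\Delta}-\tilde Q\|_\infty$, since max is non-expansive. Solving the resulting recursion gives
\begin{align}
\nonumber
\|Q^*_{\mathcal{M}_\Delta}-\tilde Q\|_\infty\le \frac{1}{1-\gamma}\Big(C_1+\frac{\gamma\xi_\varepsilon}{2(1-\gamma)}C_2\Big)=:\frac{\kappa}{1-\gamma}.
\end{align}

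Next, I apply the same argument to the fixed policy $\pi^\uparrow:=(\bar{\pi}^*_\varepsilon)^\uparrow$. Its value $V^{\pi^\uparrow}_{\mathcal{M}_\Delta}$ satisfies the policy-evaluation Bellman equation on $\mathcal{M}_\Delta$, and $\bar V^{\bar\pi^*_\varepsilon}=V^*_{\bar{\mathcal{M}}_\varepsilon}$ satisfies the abstract policy-evaluation equation. The lifting is constructed so that $\sum_{a\in g_x^{-1}(\bar a)}\pi^\uparrow(a\mid x)=\bar\pi^*_\varepsilon(\bar a\mid f_x(x))$. Repeating the same block-grouping and centering steps therefore gives $\|V^{\pi^\uparrow}_{\mathcal{M}_\Delta}-\tilde V\|_\infty\le \kappa/(1-\gamma)$. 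The triangle inequality then yields
\begin{align}
\nonumber
\|V^*_{\mathcal{M}_\Delta}-V^{\pi^\uparrow}_{\mathcal{M}_\Delta}\|_\infty\le \|V^*_{\mathcal{M}_\Delta}-\tilde V\|_\infty+\|\tilde V-V^{\pi^\uparrow}_{\mathcal{M}_\Delta}\|_\infty\le\frac{2\kappa}{1-\gamma},
\end{align}
which is exactly the claimed bound.

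I expect the main obstacle to be the centering step, because that is where the factor $\xi_\varepsilon/2$ must come out instead of the cruder $R_{\max}$. This requires bounding the span of $V^*_{\bar{\mathcal{M}}_\varepsilon}$ by $\xi_\varepsilon/(1-\gamma)$, which follows because every return lies in $[\min\bar{\mathcal{R}}_\varepsilon/(1-\gamma),\max\bar{\mathcal{R}}_\varepsilon/(1-\gamma)]$. It also requires carefully verifying that the zero-sum property of the bracket holds for every $(x,a)$. A secondary point is confirming that the admissibility assumptions ($\mathcal{A}_x=\mathcal{A}$, and $g_x$ surjective onto $\bar{\mathcal{A}}_{f_x(x)}$) make the max over $a$ coincide with the max over $\bar a$ in the first step.
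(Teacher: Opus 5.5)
Your proposal is correct and takes essentially the same approach as the paper, which obtains the lemma by directly invoking the approximate-homomorphism value-loss bound of \citet{approximate-MDP} for $\mathcal{M}_\Delta$ and $\bar{\mathcal{M}}_\varepsilon$. Your two-sided estimate is a self-contained reconstruction of that cited Whitt-style argument: you bound both $\|Q^*_{\mathcal{M}_\Delta}-\tilde Q\|_\infty$ and $\|V^{\pi^\uparrow}_{\mathcal{M}_\Delta}-\tilde V\|_\infty$ by $\kappa/(1-\gamma)$ via block-grouping and span-centering, then apply the triangle inequality, and you correctly identify the surjectivity and admissibility points the paper's construction supplies ($g_x$ the identity, $\mathcal{A}_x=\mathcal{A}$).
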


Lemma~\ref{lemma:approx_perf} follows directly by applying the value-loss bound formalized in \citet{approximate-MDP} to the regular MDP formulation. This implies that when the reward and transition discrepancies are small, solving the approximate abstract MDP remains sufficient to obtain a near-optimal policy for the original delayed problem. Given a policy $\bar{\pi}_\varepsilon$ on $\bar{\mathcal{M}}_{\varepsilon}$, its lifted policy on $\mathcal{M}_\Delta$ is defined as
\begin{align}
    \nonumber
    (\bar{\pi}_\varepsilon)^\uparrow(a \mid x)
    =
    \frac{\bar{\pi}_\varepsilon(\bar{a}\mid f_x(x))}{|g_x^{-1}(\bar{a})|},
    \qquad a \in g_x^{-1}(\bar{a}), 
\end{align} which takes the same form as in the exact abstraction case. Unlike in the exact case, however, lifting an optimal policy from $\bar{\mathcal{M}}_{\varepsilon}$ does not generally preserve optimality in $\mathcal{M}_\Delta$. Instead, the performance loss of the lifted policy is quantified by the value-loss bound in Lemma~\ref{lemma:approx_perf}. 


\section{Deep Delayed Homomorphic Policy Gradient}

So far, we have presented the DHRL framework for finite domains based on belief-equivalence and finite MDP homomorphisms. In finite domains, the abstract MDP can be identified through an explicit partition and quotient construction. In continuous domains, however, extending this principle would require explicitly determining belief-equivalent augmented states across a continuous space, which is generally intractable in practice. We therefore adopt the continuous MDP homomorphism and the stochastic homomorphic policy gradient theorem \citep{HPG2}, which extends key theoretical results from finite MDP homomorphisms to continuous domains. We briefly reproduce the key notions in Appendix~\ref{HPG-reproduce} and refer to \citet{HPG2} for the full formalism.

\citet{HPG2} extends the finite MDP homomorphisms to continuous domains and shows that the policy gradient in the original MDP can be obtained from its abstract MDP. This yields the following stochastic homomorphic policy gradient theorem, specialized here to the delayed settings.

\begin{lemma}\label{SHPG}
Let $\bar{\mathcal{M}}_\Delta$ be the homomorphic image of the regular MDP $\mathcal{M}_\Delta$ under a continuous MDP homomorphism $h_x=(f_x,g_x)$, where the augmented state and action spaces are continuous. Let $\pi_\theta^\uparrow$ and $\bar{\pi}_\theta$ be the stochastic policies defined on $\mathcal{M}_\Delta$ and $\bar{\mathcal{M}}_\Delta$, respectively, where $\pi_\theta^\uparrow$ is the (lifted) regular policy induced by the abstract policy $\bar{\pi}_\theta$ via policy lifting. Then the policy gradient of the performance measure $J_{\mathcal{M}_\Delta}(\theta)$ with respect to $\theta$ is given by
    \begin{align}
        \nabla_\theta J_{\mathcal{M}_\Delta}(\theta) = \int_{\bar{x} \in \bar{\mathcal{X}}} \rho^{\bar{\pi}_\theta}(\bar{x})\int_{\bar{a} \in \bar{\mathcal{A}}}Q^{\bar{\pi}_\theta}_{\bar{\mathcal{M}}_\Delta}(\bar{x}, \bar{a}) \nabla_\theta \bar{\pi}_\theta(d\bar{a} \mid \bar{x}) d\bar{x},
    \end{align}where $\rho^{\bar{\pi}_\theta}$ denotes the discounted stationary distribution on $\bar{\mathcal{M}}_\Delta$ induced by $\bar{\pi}_\theta$.
\end{lemma}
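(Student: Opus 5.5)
The plan is to specialize the stochastic homomorphic policy gradient theorem of \citet{HPG2} to the regular MDP $\mathcal{M}_\Delta$. The argument has three steps: show that lifting preserves values, apply the standard policy gradient theorem on the abstract MDP, and identify the two gradients.

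\textbf{Step 1: value equivalence under lifting.} First I would verify that $\mathcal{M}_\Delta$ satisfies the standing assumptions of \citet{HPG2}. These are that $\mathcal{X}=\mathcal{S}\times\mathcal{A}^\Delta$ and $\mathcal{A}$ are measurable (Polish) spaces, that $\mathcal{P}_\Delta$ is a Markov kernel, and that $\mathcal{R}_\Delta$ is bounded. All of these hold because $\mathcal{P}_\Delta$ is a product of $\mathcal{P}$ with Dirac measures, and $\mathcal{R}_\Delta$ is a belief-average of a bounded $\mathcal{R}$.

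Next I would use the continuous homomorphism conditions. The first is reward invariance, $\bar{\mathcal{R}}_\Delta(f_x(x),g_x(a))=\mathcal{R}_\Delta(x,a)$. The second is the pushforward kernel identity $\bar{\mathcal{P}}_\Delta(\bar B\mid f_x(x),g_x(a))=\mathcal{P}_\Delta(f_x^{-1}(\bar B)\mid x,a)$ for measurable $\bar B$. From these I would show that the lifted policy $\pi^\uparrow_\theta$ satisfies
\begin{align}
\nonumber
Q^{\pi_\theta^\uparrow}_{\mathcal{M}_\Delta}(x,a)=Q^{\bar\pi_\theta}_{\bar{\mathcal{M}}_\Delta}\big(f_x(x),g_x(a)\big),\qquad V^{\pi_\theta^\uparrow}_{\mathcal{M}_\Delta}(x)=V^{\bar\pi_\theta}_{\bar{\mathcal{M}}_\Delta}\big(f_x(x)\big).
\end{align}
This follows from a fixed-point argument. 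Define $\tilde Q(x,a):=Q^{\bar\pi_\theta}_{\bar{\mathcal{M}}_\Delta}(f_x(x),g_x(a))$. Using the change-of-variables identity $\int_{\mathcal{A}} F(g_x(a))\,\pi^\uparrow_\theta(da\mid x)=\int_{\bar{\mathcal{A}}}F(\bar a)\,\bar\pi_\theta(d\bar a\mid f_x(x))$, which is the defining property of lifting, together with the pushforward kernel identity, $\tilde Q$ satisfies the Bellman equation of $\pi^\uparrow_\theta$ on $\mathcal{M}_\Delta$. Uniqueness of the $\gamma$-contraction fixed point then gives equality.

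\textbf{Step 2: reduce the objective and differentiate on the abstract MDP.} Consequently $J_{\mathcal{M}_\Delta}(\theta)=\int V^{\pi^\uparrow_\theta}_{\mathcal{M}_\Delta}\,d\mu_0=\int V^{\bar\pi_\theta}_{\bar{\mathcal{M}}_\Delta}\,d\bar\mu_0$, where $\bar\mu_0=\mu_0\circ f_x^{-1}$. So $J_{\mathcal{M}_\Delta}(\theta)=J_{\bar{\mathcal{M}}_\Delta}(\theta)$ as functions of $\theta$. I would then apply the standard (measure-theoretic) stochastic policy gradient theorem directly on $\bar{\mathcal{M}}_\Delta$. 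Unrolling $\nabla_\theta V^{\bar\pi_\theta}(\bar x)=\int Q^{\bar\pi_\theta}\nabla_\theta\bar\pi_\theta(d\bar a\mid\bar x)+\gamma\int\bar\pi_\theta\int\bar{\mathcal{P}}_\Delta\nabla_\theta V^{\bar\pi_\theta}$ and summing the geometric series yields the discounted occupancy $\rho^{\bar\pi_\theta}$ and the stated formula.

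\textbf{Main obstacle.} I expect the main difficulty to be the measure-theoretic justification. This includes existence of the lifted kernel $\pi^\uparrow_\theta$ satisfying the change-of-variables identity when the fibers $g_x^{-1}(\bar a)$ are uncountable, for which I would invoke the disintegration/lifting construction of \citet{HPG2} rather than the counting formula. It also includes interchanging $\nabla_\theta$ with the integrals, which I would handle by dominated convergence, assuming $\bar\pi_\theta$ has a density differentiable in $\theta$ with integrable derivative and using boundedness $|Q|\le\|\mathcal{R}_\Delta\|_\infty/(1-\gamma)$. Given these, the result is exactly Theorem (stochastic HPG) of \citet{HPG2} instantiated with $\mathcal{M}=\mathcal{M}_\Delta$.
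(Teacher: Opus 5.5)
Your proposal is correct and follows the paper's route. The paper gives no separate argument; it directly instantiates the stochastic homomorphic policy gradient theorem of \citet{HPG2} with $\mathcal{M}=\mathcal{M}_\Delta$, and your chain (lifting as a pushforward, value equivalence via the Bellman fixed point, $J_{\mathcal{M}_\Delta}=J_{\bar{\mathcal{M}}_\Delta}$ with $\bar\mu_0=\mu_0\circ f_x^{-1}$, then the standard policy gradient theorem on $\bar{\mathcal{M}}_\Delta$) is exactly the sequence of results the paper reproduces from \citet{HPG2} in its appendix. One minor caveat is that the continuous-MDP definition reproduced there also requires $a\mapsto\mathcal{P}(C\mid s,a)$ to be smooth, which $\mathcal{P}_\Delta$ violates because of the Dirac factor on the newest action; your self-contained derivation never differentiates in the action, so it sidesteps this, making it slightly more robust than the bare citation.
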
 

Lemma~\ref{SHPG} implies that the regular policy $\pi_\theta^\uparrow$ on the regular MDP can be optimized using the policy gradient obtained from the abstract MDP, which is referred to as the homomorphic policy gradient. Building on this result, we propose a deep actor-critic algorithm, termed deep delayed homomorphic policy gradient (D$^2$HPG), that uses the homomorphic policy gradient to learn the regular policy in a sample-efficient manner. The overall learning pipeline closely resembles that of \citet{HPG2}, yet admits a simplified instantiation under a mild assumption. The following subsections describe the principled formulation of D$^2$HPG and then present its simplified practical instantiation. The pseudo-code of the D$^2$HPG algorithm is provided in Appendix~\ref{pseudo-code}.

\subsection{Principled Formulation of D$^2$HPG}

Following \citet{HPG2}, we introduce a lax-bisimulation objective \citep{lax-bisimulation-metric} to learn the homomorphism map by matching abstract rewards and transitions across paired samples. In particular, the homomorphism map $h_{\xi, \omega} = (f_\xi, g_\omega)$ is parameterized by $\xi, \omega$ and learned by minimizing the following lax-bisimulation loss $\mathcal{L}_{\text{lax}}$ using a pair of transition samples $\{(x_t, a_t, x_{t+1}, \tilde{r}_t)\}_{t = i, j}$ drawn from a replay buffer $\mathcal{D}$, i.e., 
\begin{align}
    \mathcal{L}_{\text{lax}}(\xi, \omega) = \mathbb{E}_{\mathcal{D}}\left[(\Vert \bar{x}^+_i - \bar{x}^+_j \Vert_1 - \left(\vert \tilde{r}_i - \tilde{r}_j \vert + \gamma W_2 (\bar{\mathcal{P}}_{\tau}(\cdot \mid \bar{x}^+_i, \bar{a}^+_i), \bar{\mathcal{P}}_{\tau}(\cdot \mid \bar{x}^+_j, \bar{a}^+_j)))\right)^2\right], \label{learning1}
\end{align} where $\bar{x}^+_t = f_\xi(x_t)$ and $\bar{a}^+_t = g_\omega(x_t, a_t)$ are the predicted abstract state and action, $\bar{\mathcal{P}}_{\tau}$ is an abstract transition modeled as a Gaussian distribution, and $W_2$ is a 2-Wasserstein distance, which admits a closed-form expression for Gaussian distributions. The parameterized abstract reward model $\bar{\mathcal{R}}_\nu$ and transition model $\bar{\mathcal{P}}_{\tau}$ are jointly learned with the homomorphism map via auxiliary loss $\mathcal{L}_{\text{h}}$ defined as
\begin{align}
    \mathcal{L}_{\text{h}}(\nu, \tau) = \mathbb{E}_{\mathcal{D}}\big[\Vert \bar{x}^+_{i+1} - \bar{x}_{i+1}\Vert_2^2  + (\bar{r}^{+}_i  - \tilde{r}_i)^2\big], \label{learning2}
\end{align} where $\bar{r}^{+}_i = \bar{\mathcal{R}}_\nu(\bar{x}^+_i, \bar{a}^+_i)$ is the predicted abstract reward, and $\bar{x}_{i+1} \sim \bar{\mathcal{P}}_{\tau}(\cdot \mid \bar{x}^+_i, \bar{a}^+_i)$.

In principle, the regular policy $\pi^{\uparrow}_\theta$ can be optimized solely based on the homomorphic policy gradient obtained from the abstract MDP. In practice, however, \citet{HPG2} reports that it is often more effective to train the regular policy with an effective stochastic actor-critic algorithm \citep{sac}, while training the abstract policy using the homomorphic policy gradient. Under this decoupled training scheme, however, the regular and abstract policies are not automatically coupled through the policy lifting relation, requiring an additional coupling procedure. In continuous domains, this can be implemented through sampling-based policy alignment \citep{approx-lifting}.

Specifically, let the regular and abstract policies $\pi^{\uparrow}_\theta$ and $\bar{\pi}_{\theta'}$ be parameterized by independent neural networks. The lifting loss $\mathcal{L}_\text{lift}$ is introduced to approximate the policy lifting relation by matching the conditional expectation and standard deviation of the abstract actions induced by the regular policy and those generated by the abstract policy, i.e.,
\begin{align} \label{approx-policy-lift}
    \mathcal{L}_\text{lift}(\theta, \theta') = \left\Vert \mathbb{E}_{\pi^{\uparrow}_\theta(\cdot \mid x_i)}\left[\bar{a}^+\right] - \mathbb{E}_{\bar{\pi}_{\theta'}(\cdot \mid \bar{x}^+_i)}\left[\bar{a}\right]\right\Vert_2^2 + \left\Vert\sigma_{\pi^{\uparrow}_\theta(\cdot\mid x_i)}\left[\bar{a}^+\right] - \sigma_{\bar{\pi}_{\theta'}(\cdot\mid \bar{x}^+_i)}\left[\bar{a}\right]\right\Vert_2^2, 
\end{align} where $\bar{a}^+ = g_\omega(x_i, a)$ and $\sigma[\cdot]$ denotes the standard deviation. By minimizing this loss, the regular and abstract policies are aligned so that the gradient information obtained from the abstract MDP can be transferred to the regular policy. The critic losses can be written in a generic one-step TD form as
\begin{align}
    \nonumber
    \mathcal{L}_\text{regular-critic}(\phi_1) &= \mathbb{E}_{\mathcal{D}}\big[(\tilde{r}_i+ \gamma Q_{\phi_1'}(x_{i+1}, a_{i+1}) - Q_{\phi_1}(x_i, a_i))^2\big],\\ \nonumber
    \mathcal{L}_\text{abstract-critic}(\phi_2) &= \mathbb{E}_{\mathcal{D}}\big[({\bar{r}}^{+}_i + \gamma Q_{\phi_2'}(\bar{x}^{+}_{i+1}, \bar{a}^{+}_{i+1}) - Q_{\phi_2}(\bar{x}^{+}_i, \bar{a}^{+}_i))^2\big],
\end{align} where $\phi_1'$ and $\phi_2'$ denote the parameters of the target networks used to stabilize critic learning. Note that the target terms can be modified according to the chosen stochastic actor-critic algorithm.

\subsection{Practical Instantiation of D$^2$HPG} \label{practical}
The principled formulation of D$^2$HPG follows the stochastic homomorphic policy gradient theorem, in which the regular policy is optimized using gradient information obtained from an abstract MDP. In practice, our D$^2$HPG can be instantiated in a simplified form in near-deterministic regimes with mild stochasticity, where each augmented state is assumed to induce a highly concentrated belief about the underlying state. In these regimes, the delay-free MDP may serve as a practical surrogate for the abstract MDP, eliminating the need to learn a homomorphism map and the abstract dynamics model in Eq.~\eqref{learning1}-\eqref{learning2}. A schematic overview of practical D$^2$HPG instantiation is provided in Fig.~\ref{fig:framework}. 

\begin{figure}[h] 
\begin{center}
\includegraphics[width=0.45\textwidth]{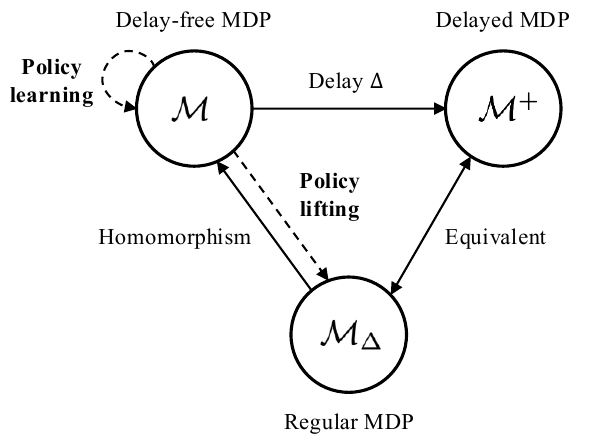}
\end{center}
\caption{Schematic overview of the practical instantiation of D$^2$HPG.} 
\label{fig:framework}
\end{figure}

Under this simplification, the delay-free policy can be learned from time-aligned transition samples in the replay buffer, while the regular policy is aligned with the delay-free policy by minimizing the lifting loss in Eq.~\eqref{approx-policy-lift}. The following proposition provides a formal justification for this simplification under deterministic dynamics.


\begin{proposition}\label{prop:practical_exact}
Under deterministic dynamics, the delay-free MDP coincides with the homomorphic image of the regular MDP. The abstract policy can therefore be identified with the delay-free policy, eliminating the need to learn a homomorphism map and the abstract dynamics model. The regular policy can then be aligned with the delay-free policy through the lifting loss in Eq.~\eqref{approx-policy-lift}.
\end{proposition}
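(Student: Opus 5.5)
Under deterministic dynamics, $\mathcal{P}(\cdot\mid s,a)=\delta_{T(s,a)}$ for a transition map $T:\mathcal{S}\times\mathcal{A}\to\mathcal{S}$. The belief $b_\Delta(\cdot\mid x)$ of an augmented state $x=(s_{t-\Delta},a_{t-\Delta},\dots,a_{t-1})$ is then the Dirac mass at the rolled-forward state
\begin{align}
\nonumber
\sigma(x):=T(\cdots T(T(s_{t-\Delta},a_{t-\Delta}),a_{t-\Delta+1})\cdots,a_{t-1}).
\end{align}
Two augmented states are therefore belief-equivalent (Definition~\ref{b-equivalent}) iff $\sigma(x)=\sigma(x')$. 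I will define $f_x:=\sigma$, so that $\bar{\mathcal{X}}:=\sigma(\mathcal{X})\subseteq\mathcal{S}$, and let $g_x$ be the identity on $\mathcal{A}$. The plan is to show that $(f_x,g_x)$ is an MDP homomorphism in the sense of Definition~\ref{def3.1} from $\mathcal{M}_\Delta$ onto the delay-free MDP $\mathcal{M}$, restricted to the reachable set $\sigma(\mathcal{X})$. This is consistent with Proposition~\ref{proposition-1}, which guarantees that the belief-induced partition $B_\Delta$ is reward-respecting and SSP, and with the bound $|\bar{\mathcal{X}}|\le|\mathcal{S}|$ of Proposition~\ref{proposition-2}.

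\textbf{Reward condition.} Because the belief is a Dirac mass, $\mathcal{R}_\Delta(x,a)=\mathbb{E}_{b_\Delta(\cdot\mid x)}[\mathcal{R}(s,a)]=\mathcal{R}(\sigma(x),a)$. Hence $\bar{\mathcal{R}}_\Delta(f_x(x),g_x(a))=\mathcal{R}(f_x(x),a)$ holds.

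\textbf{Transition condition.} From $x$ and action $a$, the next augmented state is unique: $x'=(T(s_{t-\Delta},a_{t-\Delta}),a_{t-\Delta+1},\dots,a_{t-1},a)$. Its image is $\sigma(x')=T(\sigma(x),a)$. So $\sum_{x''\in G_x}\mathcal{P}_\Delta(x''\mid x,a)$ equals $1$ exactly when the block $G_x$ corresponds to the state $T(f_x(x),a)$, and $0$ otherwise. This is precisely $\mathcal{P}(\cdot\mid f_x(x),a)$ evaluated at the block's label. Thus the homomorphic image is the delay-free MDP with its kernel and reward, with state space restricted to $\sigma(\mathcal{X})$; this equals $\mathcal{S}$ when every state is reachable, and otherwise yields the sub-MDP of $\mathcal{M}$ closed under $T$.

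\textbf{Main obstacle: surjectivity and identification with the full $\mathcal{M}$.} The subtle step is that $\sigma(\mathcal{X})$ may be a proper subset of $\mathcal{S}$. I will handle this by observing that $\sigma(\mathcal{X})$ is closed under $T$, so the restriction is itself a well-defined delay-free MDP. Policies and values on it coincide with those of $\mathcal{M}$ on these states, so nothing is lost for control. I will also check that the belief classes are labelled consistently: each class maps to a distinct state, so the quotient $\mathcal{M}_\Delta/B_\Delta$ is isomorphic to this sub-MDP, and not merely a further quotient of it.

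\textbf{Consequences.} With the image identified, the abstract policy $\bar\pi$ is a delay-free policy $\pi(\cdot\mid s)$. The lifting formula with $g_x=\mathrm{id}$ gives $\pi^\uparrow(a\mid x)=\pi(a\mid\sigma(x))$. Corollary~\ref{corollary-1} then ensures that lifting a delay-free optimal policy is optimal for $\mathcal{M}^+$. Because $f_x$ and the abstract model are known exactly (they are $\sigma$ and $(\mathcal{P},\mathcal{R})$), the losses in Eq.~\eqref{learning1}--\eqref{learning2} are unnecessary. Finally, with $\bar a^+=a$ and $\bar x^+_i=\sigma(x_i)=s_i$, the lifting loss in Eq.~\eqref{approx-policy-lift} vanishes iff the first two moments of $\pi^\uparrow_\theta(\cdot\mid x_i)$ and $\pi_{\theta'}(\cdot\mid s_i)$ agree. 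For Gaussian policy classes, this is exactly the lifting relation, which justifies aligning the regular policy to the delay-free policy through this loss.
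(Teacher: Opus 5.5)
Your proposal is correct and follows essentially the same route as the paper. Both take $f_x$ to be the map sending $x$ to the support point of its Dirac belief (your $\sigma$, the paper's $\mathcal{F}_\Delta$) and take $g_x$ to be the identity; both read off $\mathcal{R}_\Delta(x,a)=\mathcal{R}(f_x(x),a)$ and obtain the block-transition condition from the commutation $f_x(\tau_\Delta(x,a))=\tau(f_x(x),a)$.

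Your treatment of surjectivity, which restricts the image to $\sigma(\mathcal{X})$ and notes that this set is closed under $T$, is a slightly more careful version of the paper's appeal to the ``reachable part.'' Your closing moment-matching remark on the lifting loss is extra justification the paper's proof does not spell out, and it is exact only for diagonal, unsquashed Gaussian policies.
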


\begin{proof}
    See Appendix~\ref{proposition_4_2}
\end{proof}

Proposition~\ref{prop:practical_exact} justifies the simplified implementation of the D$^2$HPG algorithm under deterministic dynamics. In near-deterministic regimes with mild stochasticity, the same construction can be used as a practical approximation, substantially simplifying the principled learning pipeline while remaining effective under moderate stochasticity, as shown in Section~\ref{experiment}. For environments with a higher degree of stochasticity, one may need to learn the homomorphism map and abstract dynamics model, rather than directly identifying the abstract MDP with the delay-free one. Here, we focus on the practical instantiation and leave a systematic evaluation of the full principled formulation to future work.


\section{Experiments} \label{experiment}

\subsection{Performance Evaluation}

We evaluate D$^2$HPG on continuous-control tasks in MuJoCo, comparing it against the representative baselines: naive SAC~\citep{sac}, augmented SAC~\citep{augmented}, delayed SAC~\citep{acting}, BPQL~\citep{BPQL}, and VDPO~\citep{VDPO}. Naive SAC is a memoryless baseline that ignores delays and acts solely on the currently available state information. Delayed SAC is a model-based baseline that approximates delay-free dynamics and infers unobserved states via recursive one-step prediction. Augmented SAC, BPQL, and VDPO are augmentation-based baselines, but differ in how they learn the regular policy on the augmented state space. In particular, augmented SAC directly applies SAC to learn the regular policy on the augmented state space; BPQL improves sample efficiency by employing an alternative representation of augmented state-based value functions; and VDPO learns a delay-free policy and then distills it into the regular policy via behavior cloning. We instantiate D$^2$HPG with BPQL as the regular-policy learner. Details of the baselines and MuJoCo benchmarks are deferred to Appendices~\ref{related-work} and~\ref{Appendix-detail}.

\begin{table*}[!h] 
\caption{Results on the continuous-control MuJoCo tasks under fixed delays with $\Delta \in \{5, 10, 20\}$. Each algorithm was evaluated for one million time steps with five different seeds, where the standard deviations of average returns are denoted by $\pm$. The best performance is highlighted.} \label{result-table}
\centering
\renewcommand{\arraystretch}{1.7}
\resizebox{\textwidth}{!}{
\begin{tabular}{cccccccc}
\Xhline{2\arrayrulewidth}
\multicolumn{2}{c}{\textbf{Environment}} & \multirow{2}{*}{Ant-v3} &  \multirow{2}{*}{HalfCheetah-v3} & \multirow{2}{*}{Hopper-v3} & \multirow{2}{*}{Walker2d-v3} & \multirow{2}{*}{Humanoid-v3} & \multirow{2}{*}{InvertedPendulum-v2} \\ 

\cline{1-2} 
$\Delta$ & \textbf{Algorithm}  & \\

\Xhline{1.5\arrayrulewidth}
\multirow{1}{*}{--}  & Delay-free SAC             & $3279.2_{\pm180}$ & $8608.4_{\pm57}$ & $2435.2_{\pm23}$ & $3305.5_{\pm234}$ & $3228.1_{\pm410}$ & $964.3_{\pm29}$ \\
\hline
\multirow{6}{*}{5}  & Naive SAC & $-74.9_{\pm4}$ & $-276.3_{\pm5}$ & $88.8_{\pm10}$ & $44.5_{\pm20}$ &          $398.3_{\pm6}$ & $32.1_{\pm2}$ \\                             
                    & Augmented SAC & $881.9_{\pm103}$ & $2130.9_{\pm344}$ & $2230.8_{\pm178}$ & $1265.4_{\pm303}$ & $629.2_{\pm56}$ & $935.7_{\pm38}$ \\                       
                    & Delayed SAC & $1093.6_{\pm132}$ & $1753.1_{\pm198}$ & $1536.6_{\pm248}$ & $858.8_{\pm207}$ & $505.9_{\pm68}$ & $925.5_{\pm18}$ \\                       
                    & VDPO   & {\cellcolor{gray!20}{$4373.3_{\pm181}$}} & {{$4819.5_{\pm36}$}} & $1917.6_{\pm105}$ & {\cellcolor{gray!20}{$3402.9_{\pm263}$}} & {{$2843.2_{\pm482}$}}  & ${764.5}_{\pm136}$ \\                 
                    & BPQL    & {{$3754.1_{\pm102}$}} &  {{$5216.3_{\pm43}$}} & $2136.3_{\pm158}$ & {{$2477.4_{\pm140}$}} & $3162.9_{\pm276}$ & {${945.5}_{\pm20}$} \\                                           
                    & \textbf{D$^2$HPG} (ours) & ${3852.3}_{\pm198}$ & {\cellcolor{gray!20}$5226.4_{\pm87}$} & {\cellcolor{gray!20}{$2509.8_{\pm157}$}} & $2704.7_{\pm328}$ & {\cellcolor{gray!20}$3320.6_{\pm255}$} & {\cellcolor{gray!20}$949.8_{\pm6}$} \\               
\hline
\multirow{6}{*}{10}  & Naive SAC & $-79.2_{\pm7}$ & $-281.3_{\pm11}$ & $39.7_{\pm6}$ & $59.2_{\pm7}$ & $394.1_{\pm10}$ & $20.7_{\pm1}$ \\                             
                    & Augmented SAC & $880.7_{\pm30}$ & $946.2_{\pm94}$ & $1002.6_{\pm182}$ & $1335.6_{\pm348}$ & $520.5_{\pm5}$ & $932.2_{\pm15}$ \\                       
                    & Delayed SAC & $924.3_{\pm66}$ & $555.1_{\pm23}$ & $1403.1_{\pm216}$ & $207.5_{\pm29}$ & $341.6_{\pm11}$ & $714.2_{\pm50}$ \\                       
                    & VDPO    & {\cellcolor{gray!20}{$3085.2_{\pm106}$}} & {{$3328.5_{\pm184}$}} & {{$1942.3_{\pm114}$}} & {\cellcolor{gray!20}{$2588.4_{\pm201}$}} & {{$2189.4_{\pm474}$}} & $597.3_{\pm110}$ \\                                   
                    & BPQL    & {{$2824.9_{\pm103}$}} & {{$4266.6_{\pm192}$}} & {{$2045.2_{\pm190}$}} & {{$2331.6_{\pm252}$}} & {{$2889.5_{\pm310}$}} & $919.7_{\pm34}$ \\                                  
                    & \textbf{D$^2$HPG} (ours) & $3010.6_{\pm70}$ & {\cellcolor{gray!20}$4551.4_{\pm118}$} & {\cellcolor{gray!20}$2374.8_{\pm160}$} & $2387.5_{\pm263}$ & {\cellcolor{gray!20}$2996.4_{\pm285}$} & {\cellcolor{gray!20}$937.3_{\pm22}$} \\               
\hline
\multirow{6}{*}{20} & Naive SAC & $-83.9_{\pm7}$ & $-262.1_{\pm5}$ & $27.6_{\pm5}$ & $54.6_{\pm11}$ & $362.9_{\pm5}$ & $24.3_{\pm2}$ \\                             
                    & Augmented SAC & $697.4_{\pm80}$ & $110.7_{\pm120}$ & $298.6_{\pm21}$ & $347.2_{\pm51}$ & $340.6_{\pm82}$ & $340.7_{\pm84}$ \\

                    & Delayed SAC & $817.9_{\pm78}$ & $552.5_{\pm40}$ & $595.5_{\pm90}$ & $102.9_{\pm3}$ & $407.2_{\pm11}$ & $67.7_{\pm10}$ \\                       
                    & VDPO    & $2419.7_{\pm95}$ & {{$2342.3_{\pm164}$}} & {{$1359.5_{\pm165}$}} & {{$795.2_{\pm123}$}} & {{$791.3_{\pm88}$}} & $19.4_{\pm8}$ \\                    
                    & BPQL    & $2069.5_{\pm138}$ & $2861.3_{\pm241}$ & $1526.7_{\pm227}$ & $846.7_{\pm443}$ & $1197.7_{\pm457}$ & $568.1_{\pm79}$ \\                                    
                    & \textbf{D$^2$HPG} (ours) & {\cellcolor{gray!20}$2694.3_{\pm99}$} & {\cellcolor{gray!20}$4089.4_{\pm146}$} & {\cellcolor{gray!20}$1818.9_{\pm211}$} & {\cellcolor{gray!20}$1151.6_{\pm246}$} & {\cellcolor{gray!20}$1829.5_{\pm333}$} & {\cellcolor{gray!20}$637.3_{\pm53}$} \\
                    
\Xhline{2\arrayrulewidth}
\end{tabular}}
\end{table*}

The results in Table~\ref{result-table} show that the proposed D$^2$HPG achieves strong overall performance, particularly under long-delay regimes. In contrast, the canonical baselines—augmented SAC and delayed SAC—struggle even under relatively short delays ($\Delta=5$). We may attribute this to severe sample inefficiency caused by state-space explosion in augmented SAC, and to error accumulation from recursive state estimation in delayed SAC. Meanwhile, BPQL and VDPO remain competitive baselines under short and mild delays ($\Delta=\{5, 10\}$), but their performance degrades significantly as the delay increases. In comparison, D$^2$HPG exhibits relatively smaller degradation and achieves a clear performance advantage under long-delay regimes ($\Delta=20$). Furthermore, the consistent advantage of D$^2$HPG over BPQL across all tasks suggests that the DHRL framework can be effectively combined with existing augmentation-based approaches to enhance their inherent sample efficiency. Additional results are provided in Appendix~\ref{full-results}.



\subsection{Robustness under Stochasticity}

\begin{figure}[h] 
\begin{center}
\includegraphics[width=1.0\textwidth]{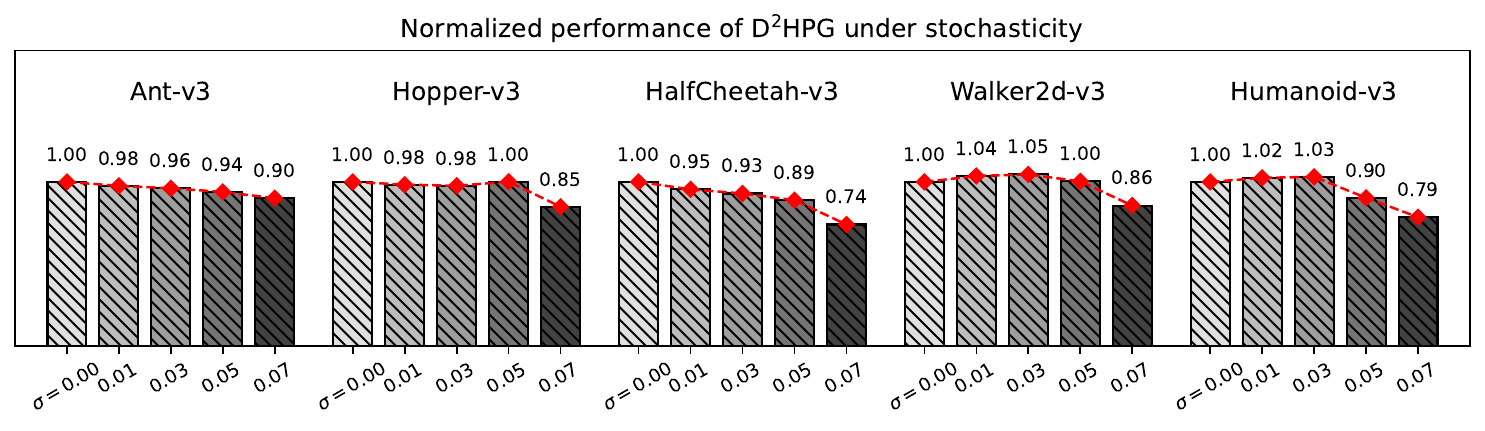}
\end{center}
\caption{Normalized performance of D$^2$HPG under actuator-level stochastic perturbations in multiple MuJoCo tasks with delay $\Delta = 10$, normalized with respect to the noise-free model. The perturbations are sampled from a normal distribution with standard deviation $\sigma \in \{0.01, 0.03, 0.05, 0.07\}$. All results are reported over one million time steps with five different seeds.}
\label{fig:experiment-2}
\end{figure}

The practical instantiation of D$^2$HPG relies on the assumption that, in near-deterministic regimes with mild stochasticity, the delay-free MDP can serve as a practical surrogate for the abstract MDP. To assess whether this surrogate remains effective beyond the deterministic limit, we evaluate D$^2$HPG under actuator-level stochastic perturbations in multiple MuJoCo tasks with delay $\Delta = 10$, where the executed action is corrupted by noise sampled from a normal distribution with standard deviation $\sigma \in \{0.01, 0.03, 0.05, 0.07\}$. Since this perturbation is injected during execution, it directly induces stochasticity in the transition dynamics. As shown in Fig.~\ref{fig:experiment-2}, D$^2$HPG remains robust under stochastic perturbations, exhibiting only gradual performance degradation as the noise level increases. This suggests that the delay-free MDP can remain a useful surrogate for the abstract MDP under moderate stochasticity, supporting the practical instantiation of D$^2$HPG beyond the exact deterministic regimes.

\subsection{Ablation Study}

We conduct ablation studies to investigate whether incorporating a separate regular-policy learner in D$^2$HPG provides additional benefits beyond the homomorphic policy gradient alone. Specifically, we compare D$^2$HPG-naive, BPQL, and D$^2$HPG-BPQL, where D$^2$HPG-naive learns the regular policy solely using the homomorphic policy gradient, whereas D$^2$HPG-BPQL employs BPQL as the auxiliary regular-policy learner. Note that we refer to D$^2$HPG-BPQL as D$^2$HPG in the main text for brevity. In addition, we evaluate D$^2$HPG under random-delay settings through the conservative-agent formulation of \citet{conservative-RL}, suggesting that the DHRL framework can be readily extended to random-delay environments. Finally, we analyze the computational overhead of D$^2$HPG variants in terms of wall-clock runtime. The corresponding results are reported in Appendix~\ref{ablation-study}.

\section{Conclusion}

In this study, we revisited delayed reinforcement learning as an abstraction problem, and proposed delayed homomorphic reinforcement learning (DHRL), a framework grounded in MDP homomorphisms that defines a belief-equivalence relation over the augmented state space to collapse control-redundant augmented states. In principle, this yields an exact abstraction under deterministic dynamics and approximate abstraction under stochastic dynamics, enabling both the actor and critic to benefit from a structured abstraction mechanism. In finite domains, we showed that exact abstraction preserves optimality and recovers the delay-free sample-complexity order by removing the $\Delta$-dependent burden caused by state-space explosion, whereas approximate abstraction admits a value-loss bound for the resulting policy. For continuous domains, we presented a deep actor-critic instantiation of the DHRL framework, termed deep delayed homomorphic policy gradient (D$^2$HPG), and showed on MuJoCo tasks that it outperforms strong augmentation-based baselines. We believe this framework provides a promising foundation for delayed reinforcement learning by addressing the state-space explosion that has long hindered augmentation-based baselines through a structured abstraction perspective.

{
\bibliographystyle{apalike}
\bibliography{reference}
}

\newpage 

\appendix

\section{Related Work} \label{related-work}

In real-world RL applications, feedback from the environment can often be delayed, which breaks the Markov assumption and deteriorates the performance of RL agents \citep{hwangbo, mahmood}. To cope with delays, delayed RL typically follow one of two strategies: augmentation-based approaches that incorporate sufficient information into the original state so that the augmented state representation induces Markovian dynamics, and model-based approaches that infer delayed information using an approximate dynamics model learned from the underlying delay-free MDP.

Concretely, the augmentation-based approaches incorporate action histories into the state to obtain an equivalent delay-free MDP, thereby enabling the direct use of conventional RL algorithms \citep{augmented, augmented2}. However, the state reformulation dramatically enlarges the state space and introduces a severe sample-complexity burden. To address this limitation, \citet{BPQL} proposes an alternative augmented state-based value representation evaluated with respect to the original state space rather than the augmented one, effectively mitigating sample inefficiency. \citet{AD-SAC} uses auxiliary tasks with shorter delays to assist critic learning for longer delays. \citet{signal} proposes delay-reconciled critic training, which time-calibrates trajectories to restore non-delayed information for offline critic updates. However, the actors in these approaches still require augmented states, since the true (i.e., non-delayed) observation cannot be accessed at inference time. Consequently, the problem of sample complexity remains only partially alleviated, with the burden shifted away from the critics but persisting for the actors. To the best of our knowledge, the closest work to ours is \citet{VDPO}, which proposes an iterative method that first learns a delay-free policy and then distills it into the regular policy via behavior cloning, achieving respectable sample efficiency. However, behavior cloning between two policies is often sensitive to distribution mismatch, particularly in high-dimensional spaces. These observations underscore the necessity of a structured and sample-efficient solution, where both the actor and critic can operate without being hampered by state-space explosion.

The model-based approach is another line of work on delay compensation that seeks to restore Markov dynamics via planning with an approximate delay-free dynamics model \citep{mbs, Duel-RNN, At-Human, Delay-aware, acting}. Specifically, it learns a delay-free dynamics model from transition samples collected in a delay-free MDP and infers unobserved states via recursive one-step predictions. However, while this avoids the state-space explosion suffered by augmentation-based approaches, such methods often rely heavily on accurate dynamics modeling and are therefore vulnerable to model errors and stochasticity inherent in the environment. Crucially, even small prediction inaccuracies can accumulate through recursion, substantially degrading the performance and stability of RL agents. Although several strong model-based methods have been proposed, a detailed investigation of this line of work is beyond the scope of our study.


\section{Limitations} \label{limitation}

The practical instantiation of D$^2$HPG relies on the assumption that, in near-deterministic regimes with mild stochasticity, the delay-free MDP may serve as a useful surrogate for the abstract MDP. Empirically, we confirm that the delay-free MDP remains a useful surrogate under moderate stochasticity. For environments with a higher degree of stochasticity, however, one may need to explicitly learn a homomorphism map and the abstract dynamics model, rather than directly identifying the abstract MDP with the delay-free one.

This limitation should not be interpreted as a restriction of the D$^2$HPG framework itself. In principle, D$^2$HPG is grounded in the stochastic homomorphic policy gradient theorem, which provides a framework for optimizing the regular policy using gradient information from an appropriate abstract MDP in both deterministic and stochastic settings. The key distinction is whether the abstract MDP can be identified directly with the delay-free MDP or must instead be learned explicitly. Extending the practical framework in this direction remains an important topic for future work.

\newpage

\section{Proofs and Definitions} \label{Appendix-A}

We first reproduce the notions of a reward-respecting partition and the stochastic substitution property (SSP) formalized in \citet{symmetry}, which will be used to prove Proposition~3.3.

\begin{definition}
A partition $B$ of an MDP $\mathcal{M} = (\mathcal{S},\mathcal{A}, \Psi,\mathcal{P}, \mathcal{R})$ is said to be reward-respecting if $(s_1, a_1) \equiv_B (s_2, a_2)$ implies $\mathcal{R}(s_1, a_1) = \mathcal{R}(s_2, a_2)$ for all $(s_1, a_1), (s_2, a_2) \in \Psi$.
\end{definition}

\begin{definition}
A partition $B$ of an MDP $\mathcal{M} = (\mathcal{S},\mathcal{A}, \Psi,\mathcal{P}, \mathcal{R})$ has the stochastic substitution property if for all $(s_1, a_1), (s_2, a_2) \in \Psi$, $(s_1, a_1) \equiv_B (s_2, a_2) $ implies $\mathcal{P}(G \mid s_1, a_1) = \mathcal{P}(G \mid s_2, a_2)$ for all $G \in B|S$. For brevity, we use the shorthand $\mathcal{P}(G \mid s, a) := \sum_{s'' \in G}\mathcal{P}(s'' \mid s, a)$.
\end{definition}

\begin{lemma}
    Let $B$ be a reward-respecting SSP partition of an MDP $\mathcal{M} = (\mathcal{S},\mathcal{A}, \Psi,\mathcal{P}, \mathcal{R})$. Then there exists a finite MDP homomorphism from $\mathcal{M}$ to the quotient MDP $\mathcal{M}/B$.
\end{lemma}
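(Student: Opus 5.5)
The plan is to build the quotient MDP $\mathcal{M}/B$ directly from the blocks of $B$ and then check that the natural projection maps satisfy the two conditions of the finite MDP homomorphism definition. First, I would set the abstract state space to $\bar{\mathcal{S}} := B|\mathcal{S}$, so that each block $[s]$ of the projected partition is one abstract state, and let $f_s(s) := [s]$. For each abstract state $[s]$, the abstract action set $\bar{\mathcal{A}}_{[s]}$ is the set of blocks of $B$ that contain some pair $(s',a)$ with $s' \in [s]$. The admissible set is $\bar{\Psi} := \{([s], [(s,a)]_B) : (s,a) \in \Psi\}$, and the state-dependent action map is $g_s(a) := [(s,a)]_B$. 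Both maps are surjective by construction, since every block of $B|\mathcal{S}$ and every block of $B$ is hit by some element.

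Next, I would define the abstract reward by $\bar{\mathcal{R}}([s], [(s,a)]_B) := \mathcal{R}(s,a)$ and the abstract transition by $\bar{\mathcal{P}}(G \mid [s], [(s,a)]_B) := \mathcal{P}(G \mid s, a)$ for $G \in B|\mathcal{S}$. The main point is well-definedness: these expressions must not depend on the chosen representative $(s,a)$ of the block. Reward-respecting gives exactly this for $\bar{\mathcal{R}}$, and SSP gives exactly this for $\bar{\mathcal{P}}$.

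I would also check that $\bar{\mathcal{P}}(\cdot \mid [s], [(s,a)]_B)$ is a probability distribution. This holds because $B|\mathcal{S}$ partitions $\mathcal{S}$, so summing $\mathcal{P}(G\mid s,a)$ over the blocks $G$ gives $1$.

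With these definitions, the homomorphism identities are immediate. We have $\bar{\mathcal{R}}(f_s(s), g_s(a)) = \mathcal{R}(s,a)$, and $\bar{\mathcal{P}}(f_s(s') \mid f_s(s), g_s(a)) = \sum_{s''\in G_s}\mathcal{P}(s''\mid s,a)$, where $G_s$ is the block containing $s'$. This is precisely the definition of a finite MDP homomorphism.

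The main obstacle I expect is a consistency issue on the action side. For a fixed abstract state $[s]$, a block $b \in B$ containing $(s_1,a_1)$ may not contain any pair whose state component is $s_2$, for another $s_2 \in [s]$. In that case $g_{s_2}$ would fail to be surjective onto $\bar{\mathcal{A}}_{[s]}$. I would handle this following \citet{symmetry}. One option is to note that the blocks of $B$ restricted to pairs with state in $[s]$ project onto the same abstract actions for every member of $[s]$. Alternatively, if this does not hold in general, I would define $\bar{\mathcal{A}}_{f_s(s)}$ through the image of $g_s$ and argue, using the construction of $B|\mathcal{S}$, that this image is the same for every representative, since states lie in the same projected block exactly when they share blocks of $B$. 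If necessary, I would simply cite Theorem~1 of \citet{symmetry}, of which this lemma is a restatement.
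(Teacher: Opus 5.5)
The paper gives no proof of this lemma. It quotes the result from \citet{symmetry} and uses it as a black box, so there is no in-paper argument to compare against. Your construction is the standard quotient construction from that reference, and it is correct: abstract states $B|\mathcal{S}$, abstract actions at $[s]$ given by the blocks of $B$ meeting $[s]\times\mathcal{A}$, $g_s(a)=[(s,a)]_B$, and reward and block-transition probabilities read off an arbitrary representative. Your treatment of well-definedness (reward-respecting for $\bar{\mathcal{R}}$, SSP for $\bar{\mathcal{P}}$), of normalization, and of the two homomorphism identities is sound.

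The one step you should not leave hedged is that every $g_s$ with $s$ in a given block $[s]$ is onto a common $\bar{\mathcal{A}}_{[s]}$. This does not follow from reward-respecting or SSP. It comes entirely from the definition of the projection $B|\mathcal{S}$, which the paper never states.

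Fix it as in \citet{symmetry}. Let $B(s)$ be the set of blocks of $B$ containing a pair whose state component is $s$, and set $[s]_{B|\mathcal{S}}=[s']_{B|\mathcal{S}}$ iff $B(s)=B(s')$. Then $\bar{\mathcal{A}}_{[s]}=B(s)$ for every representative, and each $g_s:\mathcal{A}_s\to B(s)$ is onto. Your phrase ``share blocks of $B$'' must mean exactly $B(s)=B(s')$, and your ``first option'' is just this definition restated.

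With this definition you should also check one further point. The partition induced by $(f_s,g_s)$ has blocks $b\cap([s]\times\mathcal{A})$, so it refines $B$, but it has the same projection $B|\mathcal{S}$. Hence the transition identity in the homomorphism definition refers to the right blocks.

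The choice of projection genuinely matters. Take $\mathcal{A}=\{a,c\}$. Let $\{(s_1,a),(s_2,a)\}$ be one block with reward $0$, and let $(s_1,c)$ and $(s_2,c)$ be singleton blocks with rewards $1$ and $2$. Suppose the projection merges any two states that merely occur in a common block. Then $B|\mathcal{S}=\{\{s_1,s_2\}\}$, so SSP holds trivially and $B$ is reward-respecting. Yet no homomorphism with $f_s(s_1)=f_s(s_2)$ exists. Surjectivity of $g_{s_1}$ and $g_{s_2}$ onto the same abstract action set would force the reward sets $\{0,1\}$ and $\{0,2\}$ to coincide.
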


\subsection{Proof for Proposition 3.3} \label{Proof-Proposition-1}

\paragraph{Proposition 3.3.}
\textit{Let $B_\Delta$ be the partition of $\Psi_\Delta$ induced by the belief-equivalence relation on $\mathcal{X}$. If the dynamics are deterministic, then $B_\Delta$ is reward-respecting and satisfies SSP.}

\begin{proof}[Proof]Let $\mathcal{M}^+=(\mathcal{S}, \mathcal{A}, \Psi, \mathcal{P}, \mathcal{R}, \Delta)$ be a delayed MDP and $\mathcal{M}_\Delta = (\mathcal{X}, \mathcal{A}, \Psi_\Delta, \mathcal{P}_\Delta, \mathcal{R}_\Delta)$ be its regular MDP. By definition~\ref{b-equivalent}, the two augmented states $x_1, x_2 \in \mathcal{X}$ are belief-equivalent if it satisfies
\begin{align}
    \nonumber
    b_\Delta(\cdot \mid x_1) = b_\Delta(\cdot \mid x_2),
\end{align} where the induced equivalence classes are referred to as belief classes. Since $x_1$ and $x_2$ in the same belief class induce the same belief over the underlying state space $\mathcal{S}$, we have 
\begin{align}
\nonumber
\mathcal{R}_\Delta(x_1, a) = \mathbb{E}_{b_\Delta(\cdot \mid x_1)}[\mathcal{R}(s, a)] = \mathbb{E}_{b_\Delta(\cdot \mid x_2)}[\mathcal{R}(s, a)] = \mathcal{R}_\Delta(x_2, a),
\end{align} for all $a \in \mathcal{A}_x$. Thus, the partition $B_\Delta$ is a reward-respecting partition for $\Psi_\Delta$. Subsequently, to verify that the partition $B_\Delta$ satisfies the SSP, we need to show that, for all $a \in \mathcal{A}_x$ and $G_x \in B_{\Delta}|{\mathcal{X}}$,
\begin{align}
\nonumber
\sum_{x' \in G_x}\mathcal{P}_\Delta(x' \mid x_1, a) = \sum_{x' \in G_x}\mathcal{P}_\Delta(x' \mid x_2, a),
\end{align}
whenever $x_1$ and $x_2$ belong to the same belief class. Let $\mathsf{P}(\mathcal{S})$ denote the probability simplex over the state space $\mathcal{S}$, and define the belief-update operator $\mathcal{F}: \mathsf{P}(\mathcal{S}) \times \mathcal{A} \to \mathsf{P}(\mathcal{S})$, which maps a belief $b \in \mathsf{P}(\mathcal{S})$ and an action $a \in \mathcal{A}$ to the next belief $b' \in \mathsf{P}(\mathcal{S})$ induced by the underlying deterministic transition kernel $\mathcal{P}$. Let $\tau_\Delta: \mathcal{X} \times \mathcal{A} \to \mathcal{X}$ be the augmented transition kernel induced by $\mathcal{P}$. Then, for any augmented state $x \in \mathcal{X}$, the next belief $b'$ is given by
\begin{align}
    \nonumber
    \mathcal{F}\big(b_\Delta(\cdot \mid x), a\big) = b'_\Delta(\cdot \mid \tau_\Delta(x, a)).
\end{align} Accordingly, the belief-equivalence $b_\Delta(\cdot \mid x_1) = b_\Delta(\cdot \mid x_2)$ implies
\begin{align}
    \nonumber
    \mathcal{F}\big(b_\Delta(\cdot \mid x_1), a\big) = \mathcal{F}\big(b_\Delta(\cdot \mid x_2), a\big) \; \Longrightarrow \; b'_\Delta(\cdot \mid \tau_\Delta(x_1, a)) = b'_\Delta(\cdot \mid \tau_\Delta(x_2, a)),
\end{align} yielding the belief-equivalence relation over the next augmented states, i.e., $\tau_\Delta(x_1, a) \equiv_{b_\Delta} \tau_\Delta(x_2, a)$. From this result, we have the probabilities of transitioning from the belief-equivalent augmented states $x_1, x_2$ to the block $G_x$ that satisfy
\begin{align}
\nonumber
\sum_{x' \in G_x} \mathcal{P}_\Delta(x' \mid x_1, a) = \mathds{1}\{\tau_\Delta(x_1,a) \in G_x\} = \mathds{1}\{\tau_\Delta(x_2,a) \in G_x\} = \sum_{x' \in G_x} \mathcal{P}_\Delta(x' \mid x_2, a),
\end{align} for all $a \in \mathcal{A}_x$ and $G_x \in B_\Delta\vert \mathcal{X}$. This suggests that starting from any two augmented states in the same belief class, the probability of transitioning to every other belief class under the same action is identical. Consequently, the partition $B_\Delta$ of $\Psi_\Delta$ is a reward-respecting and satisfies SSP. This completes the proof.
\end{proof}

\subsection{Proof of Proposition 3.5}\label{Proof-Proposition-2}

\paragraph{Proposition 3.5.}

\textit{Under deterministic dynamics, the compression ratio of the abstract state space $\bar{\mathcal{X}}$ induced by belief-equivalence on the augmented state space $\mathcal{X}$ satisfies
\begin{align}
    \nonumber
    \zeta := \frac{|\bar{\mathcal{X}}|}{|\mathcal{X}|} 
    \leq \frac{1}{|\mathcal{A}|^{\Delta}},
\end{align}
for any $\Delta \in \mathbb{N}$. In particular, this implies that $|\bar{\mathcal{X}}| \leq |\mathcal{S}|$.}

\begin{proof} 

Let $\mathcal{M}^+=(\mathcal{S}, \mathcal{A}, \Psi, \mathcal{P}, \mathcal{R}, \Delta)$ be a delayed MDP, $\mathcal{M}_\Delta = (\mathcal{X}, \mathcal{A}, \Psi_\Delta, \mathcal{P}_\Delta, \mathcal{R}_\Delta)$ be the regular reformulation of $\mathcal{M}^+$, and $\bar{\mathcal{M}}_\Delta = (\bar{\mathcal{X}}, \bar{\mathcal{A}}, \bar{\Psi}_\Delta, \bar{\mathcal{P}}_\Delta, \bar{\mathcal{R}}_\Delta)$ be the homomorphic image of $\mathcal{M}_\Delta$ induced by the belief-equivalence relation. Under deterministic dynamics, the belief $b_\Delta(\cdot\mid x)$ at an augmented state $x \in \mathcal{X}$ collapses to a Dirac measure at the underlying state $s \in \mathcal{S}$ corresponding to $x$. Hence, there exists a map $\mathcal{F}_\Delta:\mathcal{X}\to\mathcal{S}$ such that $b_\Delta(\cdot\mid x) = \delta_{\mathcal{F}_\Delta(x)}$ for all $x \in \mathcal{X}$. Since the abstract state space $\bar{\mathcal{X}}$ can be identified by the set of underlying states represented by the augmented states, its cardinality satisfies
\begin{align}
\nonumber
\vert \bar{\mathcal{X}} \vert = \vert \mathrm{im}(\mathcal{F}_{\Delta})\vert = \vert \{ s\in\mathcal{S} \mid \exists x\in\mathcal{X}\;\text{s.t.}\; \mathcal{F}_{\Delta}(x)=s \} \vert \leq \vert \mathcal{S} \vert,
\end{align} where $\mathrm{im}(\mathcal{F}_{\Delta})$ denotes the image of $\mathcal{F}_\Delta$. The compression ratio $\zeta$ is thus given by
\begin{align}
\nonumber
\zeta \coloneqq \frac{|\bar{\mathcal{X}}|}{|\mathcal{X}|} = \frac{|\mathrm{im}(\mathcal{F}_{\Delta})|}{|\mathcal{S}||\mathcal{A}|^{\Delta}}
\leq \frac{|\mathcal{S}|}{|\mathcal{S}||\mathcal{A}|^{\Delta}} = \frac{1}{|\mathcal{A}|^{\Delta}},
\end{align} where $\vert \mathcal{X}\vert = \vert \mathcal{S} \vert \vert \mathcal{A} \vert^{\Delta}$. This completes the proof.   
\end{proof}

\subsection{Proof for Corollary 3.6} \label{Proof-Proposition-3}

\paragraph{Corollary 3.6.}
    \textit{Under deterministic dynamics, the sample-complexity of $Q$-learning on the abstract MDP $\bar{\mathcal{M}}_\Delta$ is upper-bounded by 
    \begin{align}
        \nonumber
        O\Big(\frac{\log (\vert \mathcal{S}\vert\vert\mathcal{A} \vert)}{\varepsilon^{2.5}(1 - \gamma)^5}\Big),
    \end{align} which is of the same order as the sample-complexity bound for the delay-free MDP $\mathcal{M}$.}

\begin{proof} Let $\mathcal{M}^+=(\mathcal{S}, \mathcal{A}, \Psi, \mathcal{P}, \mathcal{R}, \Delta)$ be a delayed MDP, $\mathcal{M}_\Delta = (\mathcal{X}, \mathcal{A}, \Psi_\Delta, \mathcal{P}_\Delta, \mathcal{R}_\Delta)$ be the regular reformulation of $\mathcal{M}^+$, and $\bar{\mathcal{M}}_\Delta = (\bar{\mathcal{X}}, \bar{\mathcal{A}}, \bar{\Psi}_\Delta, \bar{\mathcal{P}}_\Delta, \bar{\mathcal{R}}_\Delta)$ be the homomorphic image of $\mathcal{M}_\Delta$ induced by the belief-equivalence relation.  As formalized in \citet{Speed-QL}, the $Q$-learning sample complexity bound for the abstract MDP $\bar{\mathcal{M}}_\Delta$ is given by  
\begin{align}
    O\Big(\frac{\log (\vert \bar{\mathcal{X}}\vert\vert\bar{\mathcal{A}} \vert)}{\varepsilon^{2.5}(1 - \gamma)^5}\Big). \label{bound}
\end{align} Under deterministic dynamics, it follows that 
\begin{align}
\nonumber
\log(\vert \bar{\mathcal{X}} \vert  \vert \bar{\mathcal{A}} \vert ) &\leq \log(\vert \mathcal{X} \vert  \vert \bar{\mathcal{A}} \vert / \vert \mathcal{A} \vert^\Delta) \quad (\because \vert \bar{\mathcal{X}} \vert \leq \vert \mathcal{X} \vert /\vert \mathcal{A} \vert^{\Delta})
\\ \nonumber&= \log(\vert \mathcal{S} \vert \vert \mathcal{A} \vert^\Delta \vert \bar{\mathcal{A}} \vert / \vert \mathcal{A} \vert^\Delta) 
\\ \nonumber&= \log(\vert \mathcal{S} \vert \vert \bar{\mathcal{A}} \vert).
\end{align}Since $\bar{\mathcal{A}}$ is the image of $\mathcal{A}$ under the surjective action mapping, we have $\vert\bar{\mathcal{A}}\vert \leq \vert \mathcal{A}\vert$. Therefore,
\begin{align}
    \nonumber
    \log(\vert \mathcal{S} \vert \vert \bar{\mathcal{A}} \vert) \leq \log(\vert \mathcal{S} \vert \vert \mathcal{A} \vert).
\end{align}
Substituting this into the sample complexity bound in Eq.~\eqref{bound} gives
\begin{align}
\nonumber
O\Big(\frac{\log\big(|\mathcal{S}||\mathcal{A}|\big)}{\varepsilon^{2.5}(1-\gamma)^5}\Big),
\end{align} where the $\Delta$-dependent factor inside the logarithm is eliminated. This concludes the proof.
\end{proof}

\subsection{Proof for Proposition 4.2} \label{proposition_4_2}

\paragraph{Proposition 4.2.} \textit{Under deterministic dynamics, the delay-free MDP coincides with the homomorphic image of the regular MDP. The abstract policy can therefore be identified with the delay-free policy, eliminating the need to learn a homomorphism map and the abstract dynamics model. The regular policy can then be aligned with the delay-free policy through the lifting loss in Eq.~\eqref{approx-policy-lift}.}

\begin{proof}
Let $\tau: \mathcal{S} \times \mathcal{A} \to \mathcal{S}$ and $\tau_\Delta: \mathcal{X} \times \mathcal{A} \to \mathcal{X}$ denote the deterministic transition functions of $\mathcal{M}$ and $\mathcal{M}_\Delta$, respectively. Under deterministic dynamics, the belief induced by any augmented state $x \in \mathcal{X}$ collapses to a Dirac measure on $\mathcal{S}$, i.e., there exists a map $\mathcal{F}_\Delta: \mathcal{X} \to \mathcal{S}$ such that
\begin{align}   
    \nonumber
    b_\Delta(\cdot \mid x) = \delta_{\mathcal{F}_\Delta(x)}.
\end{align} Define the state map $f_x: \mathcal{X} \to \mathcal{S}$ by $f_x(x)=\mathcal{F}_\Delta(x)$ and let the action map $g_x:\mathcal{A}_x \to \mathcal{A}$ be the identity map, i.e., $g_x(a)=a$. Then we have
\begin{align}
    \nonumber
    \mathcal{R}_\Delta(x,a) = \mathbb{E}_{b_\Delta(\cdot \mid x)}[\mathcal{R}(s,a)] = \mathcal{R}(f_x(x),g_x(a)).
\end{align}
Moreover,
\begin{align}
    \nonumber
    f_x(\tau_\Delta(x,a))=\tau(f_x(x),g_x(a)),
\end{align}
for all $(x,a)\in\Psi_\Delta$. Since both $\tau$ and $\tau_\Delta$ are deterministic, this identity implies the block-transition condition in Definition~\ref{def3.1}. Therefore, the pair $(f_x,g_x)$ defines an exact MDP homomorphism from the reachable part of $\mathcal{M}_\Delta$ to $\mathcal{M}$. This justifies the use of the delay-free MDP as a useful surrogate for the abstract MDP. This completes the proof.
\end{proof}

\subsection{Approximate Abstract MDP Construction} \label{approximate-abstract}

Under deterministic dynamics, a belief-equivalence relation on the augmented state space induces an exact reward-respecting SSP partition, which yields an exact homomorphic image of the regular MDP. However, the exact equality $b_\Delta(\cdot \mid x)=b_\Delta(\cdot \mid x')$ rarely holds under stochastic dynamics. We therefore consider an approximate MDP homomorphism formalized in \citet{approximate-MDP}.

Let $\mathcal{B}_\varepsilon$ be the $\varepsilon$-abstract partition of the augmented state space $\mathcal{X}$ such that any two augmented states $x,x'\in\mathcal{X}$ in the same block satisfy
\begin{align}
\nonumber
\|b_\Delta(\cdot \mid x)-b_\Delta(\cdot \mid x')\|_{\mathrm{TV}} \le \varepsilon,
\end{align}
for $\varepsilon\in[0,1)$. Let $\bar{\mathcal{X}}_\varepsilon := \mathcal{X}/\mathcal{B}_\varepsilon$ be the corresponding abstract state space, and let
\begin{align}
\nonumber
f_x:\mathcal{X}\to\bar{\mathcal{X}}_\varepsilon
\end{align}
be the induced surjective state map. We then identify the abstract action space with the original one by setting $\bar{\mathcal A}:=\mathcal A$, and define the surjective action map $g_x:\mathcal{A}_x \to \bar{\mathcal{A}}$ as the identity, i.e., $g_x(a)=a$ for all $x \in \mathcal{X}$ and $a \in \mathcal{A}_x$. Building on these surjective maps, we can construct an approximate abstract MDP by aggregating state-action pairs that share the same abstract image. Let us define
\begin{align}
    \nonumber
    \Omega(\bar{x},\bar{a}) := \{(x,u)\in \Psi_\Delta : f_x(x)=\bar{x},\ g_x(u)=\bar{a}\},
\end{align} for any $(\bar{x},\bar{a}) \in \bar{\mathcal{X}}_\varepsilon \times \bar{\mathcal{A}}$ and let $w_{\bar{x},\bar{a}}(\cdot,\cdot)$ be a probability distribution over $\Omega(\bar{x},\bar{a})$. We then define the abstract reward and transition by weighted averages such that 
\begin{align}
    \nonumber
    \bar{\mathcal{R}}_{\varepsilon}(\bar{x},\bar{a})
    &:=
    \sum_{(x,u)\in \Omega(\bar{x},\bar{a})}
    w_{\bar{x},\bar{a}}(x,u)\,
    \mathcal{R}_\Delta(x,u),  \\ \nonumber
    \bar{\mathcal{P}}_{\varepsilon}(\bar{x}' \mid \bar{x},\bar{a})
    &:=
    \sum_{(x,u)\in \Omega(\bar{x},\bar{a})}
    w_{\bar{x},\bar{a}}(x,u)
    \sum_{y \in \mathcal{X}: f_x(y)=\bar{x}'}
    \mathcal{P}_\Delta(y \mid x,u).
\end{align}
This yields the approximate abstract MDP
\begin{align}
    \nonumber
    \bar{\mathcal{M}}_{\varepsilon} = (\bar{\mathcal{X}}_\varepsilon,\bar{\mathcal{A}},\bar{\Psi}_\varepsilon,\bar{\mathcal{P}}_{\varepsilon},\bar{\mathcal{R}}_{\varepsilon}),
\end{align}
where $\bar{\Psi}_\varepsilon := \{(\bar{x},\bar{a})\in \bar{\mathcal{X}}_\varepsilon \times \bar{\mathcal{A}} : \Omega(\bar{x},\bar{a})\neq \emptyset\}, \bar{\mathcal{P}}_{\varepsilon}:\bar{\Psi}_\varepsilon \times \bar{\mathcal{X}}_\varepsilon \rightarrow [0, 1]$, and $\bar{\mathcal{R}}_{\varepsilon}:\bar{\Psi}_\varepsilon \rightarrow \mathbb{R}$.

\section{Continuous MDP Homomorphism} \label{HPG-reproduce}

In this section, we briefly reproduce the key results established in \citet{HPG, HPG2} and refer to~\citet{HPG2} for the full formalism. These results are appropriately modified and adapted to the delayed setting to support our algorithm. 

\begin{definition}
    A continuous MDP is defined as $\mathcal{M}= (\mathcal{S}, \Sigma, \mathcal{A}, \mathcal{P}, \mathcal{R})$, where $\mathcal{S}$ is the state space assumed to be a Polish space with $\sigma$-algebra $\Sigma$, $\mathcal{A} \subset \mathbb{R}^n$ is the action space assumed to be a locally compact metric space, $\mathcal{P}:\mathcal{S} \times \mathcal{A} \times \Sigma \rightarrow [0, 1]$ is a transition kernel such that for each $(s,a) \in \mathcal{S} \times \mathcal{A}$, $C\in \Sigma \mapsto \mathcal{P}(C\mid s,a)$ is a probability measure on $\Sigma$, and $\mathcal{R}: \mathcal{S}\times \mathcal{A} \rightarrow \mathbb{R}$ is a reward function. Moreover, for all $s \in \mathcal{S}$ and $C \in \Sigma$, the map $a \mapsto \mathcal{P}(C \mid s, a)$ is smooth on $\mathcal{A}$.
\end{definition}

\begin{definition} 
    A continuous MDP homomorphism is a map $h_s = (f_s, g_s) : \mathcal{M} \rightarrow \bar{\mathcal{M}}$, where $f_s: \mathcal{S} \rightarrow \bar{\mathcal{S}}$ and for every $s \in \mathcal{S}$, $g_s : \mathcal{A} \rightarrow  \bar{\mathcal{A}}$ are measurable, surjective maps such that
    \begin{align}
        \nonumber \mathcal{R}(s, a) &= \bar{\mathcal{R}}(f_s(s), g_s(a))\\\nonumber
        \mathcal{P}(f_s^{-1}(\bar{C}) \mid s, a) &= \bar{\mathcal{P}}(\bar{C} \mid f_s(s), g_s(a))
    \end{align} for all $s \in \mathcal{S}, a \in \mathcal{A}, \bar{C} \in \bar{\Sigma}$. The condition on the reward function is directly extended from the finite case, and the condition on the transition kernel is defined by the $\sigma$-algebra on $\bar{\mathcal{M}}$, which implies that the measure $\bar{\mathcal{P}}(\cdot \mid f_s(s), g_s(a))$ is the push-forward measure of $\mathcal{P}(\cdot \mid s, a)$ under the mapping $f_s$.
\end{definition} 

\begin{theorem}[Optimal value equivalence]
    Let $\bar{\mathcal{M}}$ be the homomorphic image of a continuous MDP $\mathcal{M}$ under the continuous MDP homomorphism $h = (f_s, g_s)$. Then for any $(s, a) \in \mathcal{S} \times \mathcal{A}$, it satisfies
    \begin{align}
        \nonumber
        Q_\mathcal{M}^*(s, a) = Q^*_{\bar{\mathcal{M}}}(f_s(s), g_s(a)).
    \end{align} An analogous equivalence extends to the optimal state-value functions.
\end{theorem}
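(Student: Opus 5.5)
The statement to prove is the optimal value equivalence theorem for continuous MDP homomorphisms. The plan is to transport the finite-case argument of \citet{symmetry} to the measure-theoretic setting through the Bellman optimality operator. Write $T$ for the optimality operator on bounded measurable functions over $\mathcal{S}\times\mathcal{A}$:
\[
(TQ)(s,a)=\mathcal{R}(s,a)+\gamma\int_{\mathcal{S}}\sup_{a'}Q(s',a')\,\mathcal{P}(ds'\mid s,a).
\]
Write $\bar T$ for the analogous operator on $\bar{\mathcal{S}}\times\bar{\mathcal{A}}$. Both are $\gamma$-contractions in the sup norm, so $Q^*_{\mathcal{M}}$ and $Q^*_{\bar{\mathcal{M}}}$ are their unique fixed points. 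This requires bounded rewards and enough measurability that the suprema are attained or at least measurable, which I will assume as in \citet{HPG2}.

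The core step is a pullback identity. For a bounded measurable $\bar Q$ on $\bar{\mathcal{S}}\times\bar{\mathcal{A}}$, define its lift by $(L\bar Q)(s,a):=\bar Q(f_s(s),g_s(a))$. I will show $T(L\bar Q)=L(\bar T\bar Q)$.

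First, surjectivity of $g_s$ for each fixed $s$ gives
\[
\sup_{a'}\bar Q(f_s(s'),g_{s'}(a'))=\sup_{\bar a'}\bar Q(f_s(s'),\bar a')=:\bar V(f_s(s')).
\]
This shows that the lifted greedy value factors through $f_s$.

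Second, the reward condition gives $\mathcal{R}(s,a)=\bar{\mathcal{R}}(f_s(s),g_s(a))$.

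Third, the transition condition says $\bar{\mathcal{P}}(\cdot\mid f_s(s),g_s(a))$ is the push-forward of $\mathcal{P}(\cdot\mid s,a)$ under $f_s$. The change-of-variables formula then yields
\[
\int\bar V(f_s(s'))\,\mathcal{P}(ds'\mid s,a)=\int\bar V(\bar s')\,\bar{\mathcal{P}}(d\bar s'\mid f_s(s),g_s(a)).
\]
Combining the three pieces gives the identity.

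With the identity in hand, start value iteration from $\bar Q_0=0$ and $Q_0=L\bar Q_0=0$. Induction gives $T^k Q_0=L(\bar T^k\bar Q_0)$ for every $k$. The two sides converge uniformly to $Q^*_{\mathcal{M}}$ and $LQ^*_{\bar{\mathcal{M}}}$ respectively, since $L$ is a sup-norm isometry onto its image. Hence $Q^*_{\mathcal{M}}(s,a)=Q^*_{\bar{\mathcal{M}}}(f_s(s),g_s(a))$. Equivalently, one can skip iteration and observe that $LQ^*_{\bar{\mathcal{M}}}$ is a fixed point of $T$, then appeal to uniqueness. The state-value statement follows by taking $\sup_a$ and using surjectivity of $g_s$ once more: $V^*_{\mathcal{M}}(s)=V^*_{\bar{\mathcal{M}}}(f_s(s))$.

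The main obstacle is measure-theoretic rather than algebraic. The quantity $s'\mapsto\sup_{a'}Q(s',a')$ must be measurable, and so must $\bar V$, so that the integrals and the push-forward change of variables are legitimate. I would handle this by restricting to $Q$ functions that are continuous in the action on the locally compact action space, so the supremum can be taken over a countable dense subset, or by invoking the measurable-selection conditions assumed in \citet{HPG2}. I would also check that $\bar Q\circ(f_s,g_s)$ is jointly measurable. The paper writes $g_s$ per state, so joint measurability of $(s,a)\mapsto g_s(a)$ has to be assumed explicitly.
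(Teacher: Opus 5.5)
Your proposal is correct and takes essentially the same approach. The paper gives no proof of its own; it reproduces this theorem from \citet{HPG2}, where it is proved by carrying over the finite-case induction of \citet{symmetry} on the $m$-step optimal action-value functions and then passing to the limit. That induction uses exactly your three ingredients: the reward condition, the push-forward change of variables, and surjectivity of $g_s$ to turn $\sup_{a'}$ into $\sup_{\bar a'}$. Your fixed-point-uniqueness shortcut and the measurability caveats you raise (measurability of $s'\mapsto\sup_{a'}Q(s',a')$ and joint measurability of $(s,a)\mapsto g_s(a)$) are sound refinements, not departures.
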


\begin{theorem}[Lifting policy]
    Let $\bar{\mathcal{M}}$ be the homomorphic image of a continuous MDP $\mathcal{M}$ under the continuous MDP homomorphism $h_s = (f_s, g_s)$. Then any policy $\bar{\pi}$ on $\bar{\mathcal{M}}$ can be lifted back onto $\mathcal{M}$ via the relation
    \begin{align}
        \nonumber
        \pi^{\uparrow}(g^{-1}_s(\beta) \mid s) = \bar{\pi}(\beta \mid f_s(s)) \label{stochastic-lift}
    \end{align} for every Borel set $\beta \in \bar{\mathcal{A}}$ and $s \in \mathcal{S}$, where $\pi^{\uparrow}$ is a lifted policy of $\bar{\pi}$.
\end{theorem}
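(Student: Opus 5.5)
The plan is to prove the lifting theorem for continuous MDP homomorphisms by constructing $\pi^{\uparrow}$ explicitly from $\bar{\pi}$, one state at a time, via disintegration of measures, and then checking the defining relation on preimages. Fix $s \in \mathcal{S}$ and write $g_s:\mathcal{A}\to\bar{\mathcal{A}}$ for the measurable surjection. The goal is a probability measure $\pi^{\uparrow}(\cdot \mid s)$ on $\mathcal{A}$ whose push-forward under $g_s$ equals $\bar{\pi}(\cdot \mid f_s(s))$. This is exactly the statement $\pi^{\uparrow}(g_s^{-1}(\beta)\mid s)=\bar{\pi}(\beta\mid f_s(s))$ for every Borel $\beta$.

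The construction proceeds as follows.
\begin{enumerate}
\item Pick a measurable right inverse (section) $\sigma_s:\bar{\mathcal{A}}\to\mathcal{A}$ with $g_s\circ\sigma_s=\mathrm{id}$. Since $\mathcal{A}\subset\mathbb{R}^n$ is locally compact metric, hence Polish, a measurable selection theorem applies; I would use Kuratowski--Ryll-Nardzewski, or the Jankov--von Neumann uniformization, which gives a universally measurable section.
\item Define $\pi^{\uparrow}(\cdot\mid s):=(\sigma_s)_{\#}\,\bar{\pi}(\cdot\mid f_s(s))$.
\item Check the relation: $\pi^{\uparrow}(g_s^{-1}(\beta)\mid s)=\bar{\pi}(\sigma_s^{-1}(g_s^{-1}(\beta))\mid f_s(s))=\bar{\pi}(\beta\mid f_s(s))$, because $\sigma_s^{-1}\circ g_s^{-1}=(g_s\circ\sigma_s)^{-1}=\mathrm{id}^{-1}$.
\end{enumerate}

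An alternative construction, which mirrors the finite uniform lifting $\bar{\pi}(\bar a\mid f_s(s))/|g_s^{-1}(\bar a)|$, is the following. Choose any reference probability $\mu_s$ on $\mathcal{A}$ that has full support on each fiber. Disintegrate it along $g_s$ into conditionals $\mu_s(\cdot\mid\bar a)$ supported on $g_s^{-1}(\bar a)$, which exists because both spaces are standard Borel. Then set $\pi^{\uparrow}(\cdot\mid s)=\int \mu_s(\cdot\mid\bar a)\,\bar{\pi}(d\bar a\mid f_s(s))$. The same preimage computation verifies the relation, since $\mu_s(g_s^{-1}(\beta)\mid\bar a)=\mathbb{1}_\beta(\bar a)$.

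Next, $\pi^{\uparrow}$ has to be a genuine Markov kernel, meaning $s\mapsto\pi^{\uparrow}(C\mid s)$ is measurable. This requires choosing $\sigma_s$ jointly measurably in $(s,\bar a)$. I would apply the selection theorem to the closed-valued correspondence $(s,\bar a)\mapsto g_s^{-1}(\bar a)$, assuming the family $(s,a)\mapsto g_s(a)$ is jointly measurable, and then compose with the measurable map $f_s$.

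\textbf{Main obstacle.} I expect the hard step to be the measurability and existence of the section or disintegration. Surjectivity alone does not guarantee Borel inverses, and fibers $g_s^{-1}(\bar a)$ may fail to be closed if $g_s$ is merely measurable. If a Borel section is unavailable, the fallback is to use universally measurable sections, which still define valid policies after completing the $\sigma$-algebras. Alternatively, under the continuity of $g_s$ assumed in practice, fibers are closed and Kuratowski--Ryll-Nardzewski applies directly.
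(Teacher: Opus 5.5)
The paper does not prove this statement. Appendix~\ref{HPG-reproduce} reproduces it from \citet{HPG2} and treats the displayed identity as the defining property of a lifted policy, so existence is inherited from the citation rather than argued. In the algorithm, the identity is only enforced approximately: $\mathcal{L}_\text{lift}$ matches the mean and standard deviation of the push-forward of $\pi^{\uparrow}_\theta(\cdot\mid x)$ under $a\mapsto g_\omega(x,a)$ to those of $\bar{\pi}_{\theta'}(\cdot\mid\bar{x})$.

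Your main route is correct, and it is genuinely more than the paper offers. You read the relation as the push-forward condition $(g_s)_{\#}\pi^{\uparrow}(\cdot\mid s)=\bar{\pi}(\cdot\mid f_s(s))$. You take a universally measurable section $\sigma_s$ by Jankov--von Neumann and set $\pi^{\uparrow}(\cdot\mid s)=(\sigma_s)_{\#}\bar{\pi}(\cdot\mid f_s(s))$, computed with the completed measure. This actually shows that the downstream theorems quantifying over lifted policies are non-vacuous.

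It also exposes hypotheses the paper leaves implicit:
\begin{itemize}
\item $\bar{\mathcal{A}}$ must be standard Borel, so that the graph of $g_s$ is Borel and uniformization applies.
\item For $\pi^{\uparrow}$ to be a Markov kernel in $s$, you need joint measurability of $(s,a)\mapsto g_s(a)$. The paper's definition only makes each $g_s$ measurable separately. The statement as written asserts the identity only for each fixed $s$, so your per-state construction already proves it without that extra assumption.
\item For your continuous-$g_s$ shortcut via Kuratowski--Ryll-Nardzewski, closed fibers are not enough. You also need weak measurability, i.e.\ $g_s(U)$ Borel for open $U$. This holds because $\mathcal{A}\subset\mathbb{R}^n$ is locally compact and separable, hence $\sigma$-compact, so $g_s(U)$ is $F_\sigma$.
\end{itemize}

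Your alternative disintegration construction should be dropped or repaired. The conditionals $\mu_s(\cdot\mid\bar{a})$ are concentrated on $g_s^{-1}(\bar{a})$ only for $(g_s)_{\#}\mu_s$-almost every $\bar{a}$. So the identity $\mu_s(g_s^{-1}(\beta)\mid\bar{a})=\mathds{1}_\beta(\bar{a})$ integrates correctly against $\bar{\pi}(\cdot\mid f_s(s))$ only when $\bar{\pi}(\cdot\mid f_s(s))\ll(g_s)_{\#}\mu_s$. Choosing $\mu_s$ with full support does not ensure this; it fails, for example, when $\bar{\pi}(\cdot\mid f_s(s))$ is a point mass and $(g_s)_{\#}\mu_s$ is atomless. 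Redefining the kernel on the exceptional null set so that it is fiber-concentrated everywhere again requires a measurable section. So this route is not independent of your first argument.
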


\begin{theorem}[Value equivalence]
    Let $\bar{\mathcal{M}}$ be the homomorphic image of a continuous MDP $\mathcal{M}$ under the continuous MDP homomorphism $h_s = (f_s, g_s)$, and let $\pi^{\uparrow}$ be the lifted policy of $\bar{\pi}$ on $\mathcal{M}$. Then for any $(s, a) \in \mathcal{S} \times \mathcal{A}$, it satisfies
    \begin{align}
        \nonumber
        Q_\mathcal{M}^{\pi^\uparrow}(s, a) = Q_{\bar{\mathcal{M}}}^{\bar{\pi}}(f_s(s), g_s(a)),
    \end{align} where $Q_\mathcal{M}^{\pi^{\uparrow}}$ is the action-value function for policy $\pi^{\uparrow}$ on $\mathcal{M}$. An analogous equivalence extends to the state-value functions.
\end{theorem}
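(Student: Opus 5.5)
The statement to prove is the value-equivalence theorem: for a continuous homomorphism $h_s=(f_s,g_s)$ and a lifted policy $\pi^\uparrow$, we want $Q^{\pi^\uparrow}_{\mathcal{M}}(s,a)=Q^{\bar\pi}_{\bar{\mathcal{M}}}(f_s(s),g_s(a))$. The plan is to show that the pulled-back function $\tilde Q(s,a):=Q^{\bar\pi}_{\bar{\mathcal{M}}}(f_s(s),g_s(a))$ is a fixed point of the policy-evaluation Bellman operator $T^{\pi^\uparrow}$ on $\mathcal{M}$. Since this operator is a $\gamma$-contraction in sup norm on bounded measurable functions, its fixed point is unique and equals $Q^{\pi^\uparrow}_{\mathcal{M}}$. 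Throughout we assume bounded rewards, which is needed for the contraction argument, and measurability of $Q^{\bar\pi}_{\bar{\mathcal M}}$, so that $\tilde Q$ is measurable as a composition with the measurable maps $f_s,g_s$.

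\emph{Step 1: reduce the expected next-step value to the abstract MDP.} Define the state-value pullback $\tilde V(s):=V^{\bar\pi}_{\bar{\mathcal{M}}}(f_s(s))$. We first show
\[
\int_{\mathcal{A}}\tilde Q(s,a)\,\pi^\uparrow(da\mid s)=\tilde V(s).
\]
The integrand $\tilde Q(s,\cdot)$ factors through $g_s$. By the lifting relation $\pi^\uparrow(g_s^{-1}(\beta)\mid s)=\bar\pi(\beta\mid f_s(s))$, the push-forward of $\pi^\uparrow(\cdot\mid s)$ under $g_s$ is exactly $\bar\pi(\cdot\mid f_s(s))$. The change-of-variables formula for push-forward measures therefore gives
\[
\int Q^{\bar\pi}_{\bar{\mathcal M}}\bigl(f_s(s),g_s(a)\bigr)\,\pi^\uparrow(da\mid s)=\int Q^{\bar\pi}_{\bar{\mathcal M}}\bigl(f_s(s),\bar a\bigr)\,\bar\pi(d\bar a\mid f_s(s)),
\]
and the right-hand side equals $V^{\bar\pi}_{\bar{\mathcal M}}(f_s(s))$.

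\emph{Step 2: the one-step Bellman equation.} Next we verify $T^{\pi^\uparrow}\tilde Q=\tilde Q$. Apply the operator:
\[
(T^{\pi^\uparrow}\tilde Q)(s,a)=\mathcal{R}(s,a)+\gamma\int_{\mathcal{S}}\tilde V(s')\,\mathcal{P}(ds'\mid s,a).
\]
For the reward term, $\mathcal R(s,a)=\bar{\mathcal R}(f_s(s),g_s(a))$ by the homomorphism condition. For the transition term, $\tilde V=V^{\bar\pi}_{\bar{\mathcal M}}\circ f_s$, and the homomorphism transition condition says $\bar{\mathcal P}(\cdot\mid f_s(s),g_s(a))$ is the push-forward of $\mathcal P(\cdot\mid s,a)$ under $f_s$. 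A second push-forward change of variables turns the integral into
\[
\int V^{\bar\pi}_{\bar{\mathcal M}}(\bar s')\,\bar{\mathcal P}\bigl(d\bar s'\mid f_s(s),g_s(a)\bigr).
\]
Hence $(T^{\pi^\uparrow}\tilde Q)(s,a)=(\bar T^{\bar\pi}Q^{\bar\pi}_{\bar{\mathcal M}})(f_s(s),g_s(a))$. Since $Q^{\bar\pi}_{\bar{\mathcal M}}$ is the fixed point of $\bar T^{\bar\pi}$, this equals $\tilde Q(s,a)$.

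\emph{Step 3: conclude.} By the Banach fixed-point theorem, $T^{\pi^\uparrow}$ has a unique bounded fixed point, so $\tilde Q=Q^{\pi^\uparrow}_{\mathcal M}$. The state-value statement then follows from Step 1: $V^{\pi^\uparrow}_{\mathcal M}(s)=\int Q^{\pi^\uparrow}_{\mathcal M}(s,a)\,\pi^\uparrow(da\mid s)=V^{\bar\pi}_{\bar{\mathcal M}}(f_s(s))$.

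The main obstacle is Step 1. The lifting relation only fixes $\pi^\uparrow$ on sets of the form $g_s^{-1}(\beta)$, so we must argue that this is enough. The argument is that the integrand is measurable with respect to the $\sigma$-algebra $g_s^{-1}(\mathcal B(\bar{\mathcal A}))$, because it factors through $g_s$. Consequently, any two policies that agree on that sub-$\sigma$-algebra give the same integral. This is precisely what makes the result hold for every lifting and not just a particular one.
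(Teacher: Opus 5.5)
The paper gives no proof of this statement. It is one of the results reproduced without proof in Appendix~\ref{HPG-reproduce} from \citet{HPG2}, so there is no in-paper argument to compare against.

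Judged on its own, your proof is correct. The two facts doing the work are the ones you isolate. The lifting relation says $(g_s)_\#\pi^\uparrow(\cdot\mid s)=\bar\pi(\cdot\mid f_s(s))$. The transition condition says $(f_s)_\#\mathcal{P}(\cdot\mid s,a)=\bar{\mathcal{P}}(\cdot\mid f_s(s),g_s(a))$. Your remark that the integrand is $g_s^{-1}(\mathcal{B}(\bar{\mathcal{A}}))$-measurable correctly explains why the conclusion holds for every lift, not a particular one.

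The other natural route is to induct on the iterates of the two evaluation operators from matched initializations and pass to the limit. That uses the same two push-forward identities but also needs convergence of the iterates. Your fixed-point-uniqueness argument needs only that $Q^{\pi^\uparrow}_{\mathcal M}$ satisfies its Bellman equation, so it is slightly shorter.

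One technical caveat concerns measurability. The paper's definition makes only each $g_s$ measurable. Joint measurability of $(s,a)\mapsto g_s(a)$, and hence of $\tilde Q$, therefore does not follow from ``composition with measurable maps''. You can either add it as an assumption or sidestep it as follows:
\begin{enumerate}
\item Set $D:=\tilde Q-Q^{\pi^\uparrow}_{\mathcal M}$.
\item The computation in your Steps 1--2, which uses only the Bellman equation on $\bar{\mathcal M}$ and the two push-forward identities, gives pointwise $D(s,a)=\gamma\int_{\mathcal S}\bigl(\tilde V(s')-V^{\pi^\uparrow}_{\mathcal M}(s')\bigr)\,\mathcal{P}(ds'\mid s,a)$. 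Here $\tilde V=V^{\bar\pi}_{\bar{\mathcal M}}\circ f_s$ is measurable.
\item The identity $\tilde V(s')-V^{\pi^\uparrow}_{\mathcal M}(s')=\int D(s',a')\,\pi^\uparrow(da'\mid s')$ needs only measurability of each section $D(s',\cdot)$.
\item Hence $\sup|D|\le\gamma\sup|D|$, so $D\equiv 0$ everywhere.
\end{enumerate}
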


\begin{theorem}[Policy gradient in \citet{sutton}]
    Let $\pi_\theta: \mathcal{S} \times \mathcal{A} \to [0,1]$ be a stochastic policy defined on $\mathcal{M}$. Then the gradient of the performance measure $J_\mathcal{M}(\theta) = \mathbb{E}_{\pi_\theta}[V^{\pi_\theta}_\mathcal{M}(s)]$ with respect to $\theta$ is given by
    \begin{align}
        \nonumber
        \nabla_\theta J_\mathcal{M}(\theta) = \int_{s \in \mathcal{S}} \rho^{\pi_\theta}(s)\int_{a \in \mathcal{A}}Q^{\pi_\theta}_{\mathcal{M}}(s, a) \nabla_\theta\pi_\theta(da \mid s) ds,
    \end{align} where $\rho^{\pi_\theta}$ denotes the discounted stationary distribution on $\mathcal{M}$ under the policy $\pi_\theta$. 
\end{theorem}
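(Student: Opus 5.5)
The plan is the classical unrolling argument of \citet{sutton}, carried out with measures in place of sums. Throughout I take $J_\mathcal{M}(\theta)=\int_{\mathcal{S}}\mu_0(ds)\,V^{\pi_\theta}_\mathcal{M}(s)$ for an initial-state distribution $\mu_0$. I also assume that $\mathcal{R}$ is bounded and that $\theta\mapsto\pi_\theta(\cdot\mid s)$ is differentiable, with $\nabla_\theta\pi_\theta(da\mid s)$ a finite signed measure of uniformly bounded total variation. These are the standing regularity conditions implicit in the statement, and they make $V^{\pi_\theta}$ and $Q^{\pi_\theta}$ bounded by $\|\mathcal{R}\|_\infty/(1-\gamma)$.

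\emph{Step 1 (one-step identity).} Start from
\[
V^{\pi_\theta}(s)=\int_{\mathcal{A}}Q^{\pi_\theta}(s,a)\,\pi_\theta(da\mid s),\qquad Q^{\pi_\theta}(s,a)=\mathcal{R}(s,a)+\gamma\int_{\mathcal{S}}\mathcal{P}(ds'\mid s,a)\,V^{\pi_\theta}(s').
\]
Differentiate the first identity by the product rule. Since $\mathcal{R}$ and $\mathcal{P}$ do not depend on $\theta$, only the $V^{\pi_\theta}(s')$ term inside $Q$ contributes a derivative. This gives
\[
\nabla_\theta V^{\pi_\theta}(s)=\phi_\theta(s)+\gamma\int_{\mathcal{S}}\mathcal{P}^{\pi_\theta}(ds'\mid s)\,\nabla_\theta V^{\pi_\theta}(s'),
\]
where $\phi_\theta(s):=\int_{\mathcal{A}}Q^{\pi_\theta}(s,a)\,\nabla_\theta\pi_\theta(da\mid s)$ and $\mathcal{P}^{\pi_\theta}(ds'\mid s):=\int_{\mathcal{A}}\pi_\theta(da\mid s)\,\mathcal{P}(ds'\mid s,a)$ is the state kernel induced by the policy.

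\emph{Step 2 (unrolling).} Iterate this identity $n$ times to get
\[
\nabla_\theta V^{\pi_\theta}(s)=\sum_{k=0}^{n-1}\gamma^k\big((\mathcal{P}^{\pi_\theta})^k\phi_\theta\big)(s)+\gamma^n\big((\mathcal{P}^{\pi_\theta})^n\nabla_\theta V^{\pi_\theta}\big)(s).
\]
If $\nabla_\theta V^{\pi_\theta}$ is bounded, the remainder vanishes as $n\to\infty$. Integrating against $\mu_0$ and writing $\rho^{\pi_\theta}:=\sum_{k\ge0}\gamma^k\,\mu_0(\mathcal{P}^{\pi_\theta})^k$ for the (unnormalized) discounted state-visitation measure yields
\[
\nabla_\theta J=\int\rho^{\pi_\theta}(ds)\,\phi_\theta(s),
\]
which is the claimed formula. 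Any normalization of $\rho^{\pi_\theta}$ only rescales the gradient by the positive constant $1-\gamma$.

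\emph{Main obstacle (rigour).} The real work is justifying the interchanges in Steps 1 and 2, namely moving $\nabla_\theta$ inside the integrals over $a$ and $s'$, and showing that $\nabla_\theta V^{\pi_\theta}$ exists and is bounded.

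I would avoid differentiating an unknown function. Instead, I would define the candidate $G_\theta(s):=\sum_k\gamma^k\big((\mathcal{P}^{\pi_\theta})^k\phi_\theta\big)(s)$. It is bounded by $\|\mathcal{R}\|_\infty\,L/(1-\gamma)^2$, where $L$ bounds the total variation of $\nabla_\theta\pi_\theta$. I would then show $G_\theta=\nabla_\theta V^{\pi_\theta}$ directly.

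To do this, write $V^{\pi_\theta}=\sum_k\gamma^k(\mathcal{P}^{\pi_\theta})^k r_\theta$ with $r_\theta(s)=\int\mathcal{R}(s,a)\,\pi_\theta(da\mid s)$. Differentiate term by term via the product rule. Each term's derivative is bounded by $C\,k\,\gamma^k$, which is summable, so dominated convergence (for the sum and for the inner integrals, using bounded integrands and finite-measure derivatives) legitimizes the exchange. Regrouping the resulting double sum, by Fubini for absolutely convergent series, reproduces $G_\theta$.

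The smoothness of $a\mapsto\mathcal{P}(C\mid s,a)$ assumed in the continuous-MDP definition is not needed here, since $\theta$ enters only through $\pi_\theta$.
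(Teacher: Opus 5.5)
Your proposal is correct and is essentially the classical argument of Sutton et al.: differentiate $V^{\pi_\theta}=\int Q^{\pi_\theta}\,d\pi_\theta$, obtain the recursion for $\nabla_\theta V^{\pi_\theta}$, and unroll it into the (unnormalized) discounted visitation measure, which is exactly what the paper relies on here since it states the theorem by citation without a proof of its own. Your Neumann-series route, differentiating $\sum_k\gamma^k(\mathcal{P}^{\pi_\theta})^k r_\theta$ term by term under the summable bound $C(k+1)\gamma^k$ and regrouping to recover $\sum_k\gamma^k(\mathcal{P}^{\pi_\theta})^k\phi_\theta$, soundly supplies the interchange-of-limits justification that the original finite-space derivation leaves implicit.
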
 

\begin{theorem}[Stochastic homomorphic policy gradient]
    Let $\bar{\mathcal{M}}$ be the homomorphic image of a continuous MDP $\mathcal{M}$ under the continuous MDP homomorphism $h_s = (f_s, g_s)$. For a stochastic policy $\bar{\pi}_\theta$ defined on $\bar{\mathcal{M}}$, the gradient of the performance measure $J_\mathcal{M}(\theta)$ with respect to $\theta$ is given by
    \begin{align}
        \nonumber
        \nabla_\theta J_\mathcal{M}(\theta) =\int_{\bar{s} \in \bar{\mathcal{S}}} \rho^{\bar{\pi}_\theta}(\bar{s})\int_{\bar{a} \in \bar{\mathcal{A}}}Q^{{\bar\pi}_\theta}_{\bar{\mathcal{M}}}(\bar{s}, \bar{a}) \nabla_\theta\bar{\pi}_\theta(d\bar{a} \mid \bar{s}) d\bar{s},
    \end{align} where $\pi^{\uparrow}_\theta$ is a lifted policy of $\bar{\pi}_\theta$ and $\rho^{\bar{\pi}_\theta}$ denotes the discounted stationary distribution on $\bar{\mathcal{M}}$ under the policy $\bar{\pi}_\theta$. 
\end{theorem}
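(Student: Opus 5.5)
The plan is to derive the stated formula from the ordinary policy gradient theorem of \citet{sutton}, applied on $\mathcal{M}$ to the lifted policy $\pi^{\uparrow}_\theta$, and then to transport every ingredient of that expression to $\bar{\mathcal{M}}$ using the homomorphism $h_s=(f_s,g_s)$. The identity $J_\mathcal{M}(\theta)=J_{\bar{\mathcal{M}}}(\theta)$ is not needed for the argument; the gradient is computed directly on $\mathcal{M}$. Three facts carry the proof: the value equivalence theorem above, the lifting relation, and a pushforward identity for discounted state distributions.

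First, I apply the policy gradient theorem on $\mathcal{M}$ with $\pi_\theta=\pi^{\uparrow}_\theta$, which gives
\begin{align}
\nonumber
\nabla_\theta J_\mathcal{M}(\theta)=\int_{\mathcal{S}}\rho^{\pi^{\uparrow}_\theta}(s)\int_{\mathcal{A}}Q^{\pi^{\uparrow}_\theta}_{\mathcal{M}}(s,a)\,\nabla_\theta\pi^{\uparrow}_\theta(da\mid s)\,ds.
\end{align}
By the value equivalence theorem, $Q^{\pi^{\uparrow}_\theta}_{\mathcal{M}}(s,a)=Q^{\bar{\pi}_\theta}_{\bar{\mathcal{M}}}(f_s(s),g_s(a))$, so the inner integrand depends on $a$ only through $g_s(a)$.

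Second, the lifting relation $\pi^{\uparrow}_\theta(g_s^{-1}(\beta)\mid s)=\bar{\pi}_\theta(\beta\mid f_s(s))$ says that $\bar{\pi}_\theta(\cdot\mid f_s(s))$ is the pushforward of $\pi^{\uparrow}_\theta(\cdot\mid s)$ under $g_s$, for every $\theta$. The change-of-variables formula then gives
\begin{align}
\nonumber
\int_{\mathcal{A}}Q^{\bar{\pi}_\theta}_{\bar{\mathcal{M}}}(f_s(s),g_s(a))\,\pi^{\uparrow}_{\theta}(da\mid s)=\int_{\bar{\mathcal{A}}}Q^{\bar{\pi}_\theta}_{\bar{\mathcal{M}}}(f_s(s),\bar a)\,\bar{\pi}_{\theta}(d\bar a\mid f_s(s)).
\end{align}
In the gradient formula $Q$ is held fixed and only the policy measure is differentiated. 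Since the measure identity holds for all $\theta$, differentiating both sides in $\theta$ with the function held fixed yields the same identity with $\nabla_\theta\pi^{\uparrow}_\theta$ and $\nabla_\theta\bar{\pi}_\theta$. After this step the inner integral is a function $\Phi_\theta(f_s(s))$ of the abstract state alone.

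Third, I show $f_{s\#}\rho^{\pi^{\uparrow}_\theta}=\rho^{\bar{\pi}_\theta}$. I expect this to be the main obstacle. Writing $\rho=\sum_t\gamma^t\Pr(s_t\in\cdot)$, it suffices to prove, by induction on $t$, that the law of $f_s(s_t)$ under $\pi^{\uparrow}_\theta$ in $\mathcal{M}$ equals the law of $\bar s_t$ under $\bar{\pi}_\theta$ in $\bar{\mathcal{M}}$.
\begin{itemize}
\item The base case needs the abstract initial distribution to be the pushforward of the original one; I would take this as part of the setup.
\item For the inductive step, the transition condition $\mathcal{P}(f_s^{-1}(\bar C)\mid s,a)=\bar{\mathcal{P}}(\bar C\mid f_s(s),g_s(a))$ shows that $\Pr(f_s(s_{t+1})\in\bar C\mid s_t)$ is a function of $(f_s(s_t),g_s(a_t))$.
\item Integrating this over $a_t\sim\pi^{\uparrow}_\theta(\cdot\mid s_t)$ and again applying the lifting pushforward shows the conditional law depends only on $f_s(s_t)$, and matches the abstract one-step kernel under $\bar{\pi}_\theta$.
\end{itemize}
With this identity, $\int_{\mathcal{S}}\rho^{\pi^{\uparrow}_\theta}(s)\,\Phi_\theta(f_s(s))\,ds=\int_{\bar{\mathcal{S}}}\rho^{\bar{\pi}_\theta}(\bar s)\,\Phi_\theta(\bar s)\,d\bar s$, which is exactly the claimed formula.

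Besides this induction, the remaining technical care is measure-theoretic. The densities should be read as measures. Differentiation must be justified under the integral, for instance by assuming bounded rewards, so that $|Q|\le R_{\max}/(1-\gamma)$, and smooth policy densities with integrable derivatives, so that dominated convergence applies.
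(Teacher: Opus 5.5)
Your argument is correct, but it takes a different route from the one behind this statement. The paper gives no proof of its own; it reproduces the theorem from \citet{HPG2}, where, as I understand that source, the argument uses value equivalence at the level of the objective. There, $V^{\pi^{\uparrow}_\theta}_{\mathcal{M}}=V^{\bar{\pi}_\theta}_{\bar{\mathcal{M}}}\circ f_s$, hence $J_{\mathcal{M}}(\theta)=J_{\bar{\mathcal{M}}}(\theta)$ once the abstract initial law is the pushforward of the original one. The formula is then just the policy gradient theorem of \citet{sutton} derived entirely inside $\bar{\mathcal{M}}$, so $\rho^{\bar{\pi}_\theta}$ and $Q^{\bar{\pi}_\theta}_{\bar{\mathcal{M}}}$ appear directly.

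That route is shorter and needs nothing like your third step. It requires differentiability only of $\bar{\pi}_\theta$. It is also insensitive to the choice of lift: the lifting relation fixes only the pushforward of $\pi^{\uparrow}_\theta(\cdot\mid s)$ under $g_s$, and every lift has the same $J_{\mathcal{M}}$.

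Because you apply the policy gradient theorem on $\mathcal{M}$, you tacitly need $\theta\mapsto\pi^{\uparrow}_\theta(\cdot\mid s)$ to be differentiable. This holds for the natural lift that composes $\bar{\pi}_\theta(\cdot\mid f_s(s))$ with a $\theta$-independent kernel on the fibres $g_s^{-1}(\bar a)$. It can fail for an arbitrary $\theta$-dependent lift, so you should state it as an assumption.

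In return, your proof gives more than the formula. It shows that the integrand of the original gradient at $s$ equals the abstract integrand at $f_s(s)$, and that $f_{s\#}\rho^{\pi^{\uparrow}_\theta}=\rho^{\bar{\pi}_\theta}$; your induction on one-step marginals is sound. That pushforward identity is what justifies computing the abstract expectation from states generated in $\mathcal{M}$ and mapped through $f_s$. This is what D$^2$HPG does with $f_\xi$, up to the usual off-policy approximation. The value-equivalence route proves the formula but says nothing about where the abstract states should come from.
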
 

\newpage 

\section{Experimental Details} \label{Appendix-detail}

\subsection{Hyperparameters}

The hyperparameters of D$^2$HPG variants are aligned with those used in \citet{BPQL, HPG2}. Because all baselines included in our experiments are built upon SAC~\citep{sac}, we adopt a shared hyperparameter configuration recommended by BPQL and VDPO across all algorithms to ensure a fair comparison. The detailed configuration is reported in Table~\ref{hyperparameter}.

\begin{table}[!h]
\caption{Hyperparameters for the baseline algorithms.} \label{hyperparameter}
\centering
\renewcommand{\arraystretch}{1.3} 
\begin{tabularx}{\textwidth}{X c} 
\Xhline{3\arrayrulewidth}
\textbf{Hyperparameters} & Values \\
\Xhline{1.5\arrayrulewidth}
Actor network  & $256, 256$ \\
Critic network & $256, 256$ \\
Learning rate (actor)  & $3 \times 10^{-4}$ \\
Learning rate (beta-critic) & $3 \times 10^{-4}$ \\
Temperature ($\alpha$)   & $0.2$ \\
Target smoothing coefficient ($\beta$) & $0.005$ \\
Discount factor ($\gamma$)   & $0.99$ \\
Replay buffer size   & $10^6$ \\
Batch size      & $256$ \\
Target entropy       & $-\text{dim}(\mathcal{A})$ \\
Optimizer            & Adam \citep{adam} \\
Total time steps     & $10^6$ \\
\Xhline{3\arrayrulewidth}
\end{tabularx}
\end{table}

\subsection{Environment Details} \label{Benchmark-detail}

\begin{table*}[!h] 
\caption{Environment details of the MuJoCo benchmark.} \label{environmental-1}
\centering
\renewcommand{\arraystretch}{1.3} 
\begin{tabularx}{\textwidth}{X cccc} 
\Xhline{3\arrayrulewidth}
\textbf{Environment} & State dimension & Action dimension & Action range & Time step (s) \\
\Xhline{1.5\arrayrulewidth}
Ant-v3              & $27$              & $8$           & $[-1.0, 1.0]$ & $0.05$          \\ 
HalfCheetah-v3      & $17$              & $6$           & $[-1.0, 1.0]$ & $0.05$          \\
Walker2d-v3         & $17$              & $6$           & $[-1.0, 1.0]$ & $0.008$         \\
Hopper-v3           & $11$              & $3$           & $[-1.0, 1.0]$ & $0.008$         \\
Humanoid-v3         & $376$             & $17$          & $[-0.4, 0.4]$ & $0.015$         \\
InvertedPendulum-v2 & $4$               & $1$           & $[-3.0, 3.0]$ & $0.04$          \\
\Xhline{3\arrayrulewidth}
\end{tabularx}
\end{table*}

\begin{figure}[!h]
    \centering
    \begin{subfigure}[b]{0.16\textwidth}
        \centering
        \includegraphics[width=\textwidth]{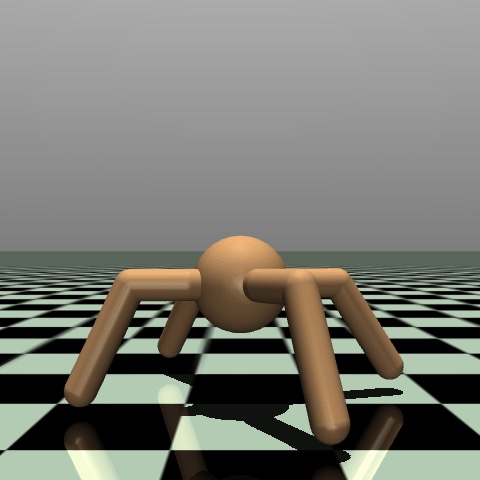}
        \caption{}
    \end{subfigure}
    \hfill 
    \begin{subfigure}[b]{0.16\textwidth}
        \centering
        \includegraphics[width=\textwidth]{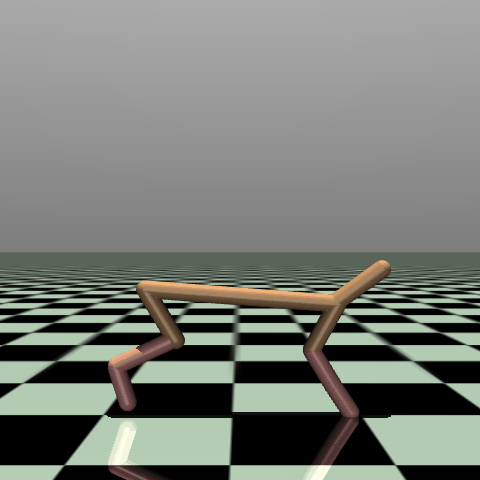}
        \caption{}
    \end{subfigure}
    \hfill
    \begin{subfigure}[b]{0.16\textwidth}
        \centering
        \includegraphics[width=\textwidth]{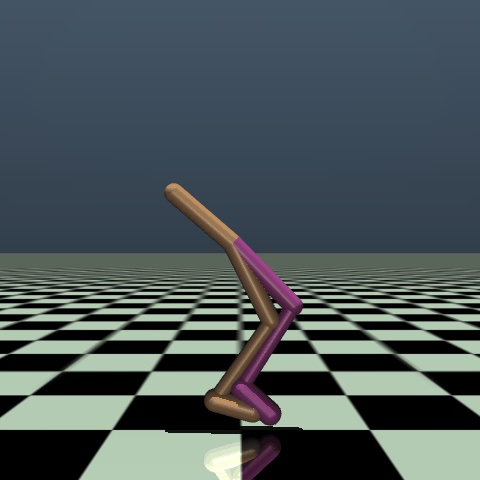}
        \caption{}
    \end{subfigure}
    \hfill
    \begin{subfigure}[b]{0.16\textwidth}
        \centering
        \includegraphics[width=\textwidth]{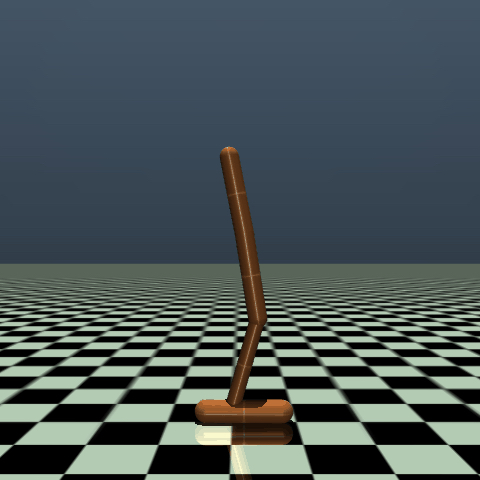}
        \caption{}
    \end{subfigure}
    \hfill
    \begin{subfigure}[b]{0.16\textwidth}
        \centering
        \includegraphics[width=\textwidth]{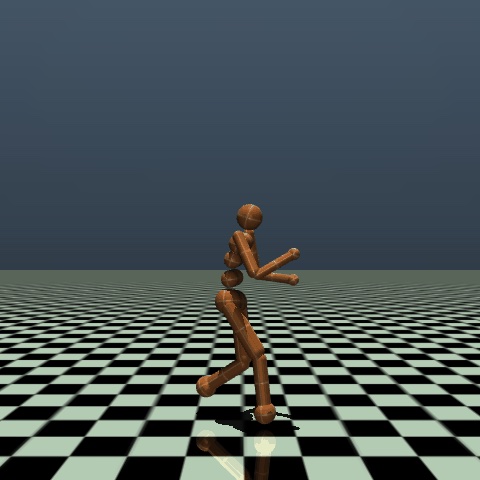}
        \caption{}
    \end{subfigure}
    \hfill
    \begin{subfigure}[b]{0.16\textwidth}
        \centering
        \includegraphics[width=\textwidth]{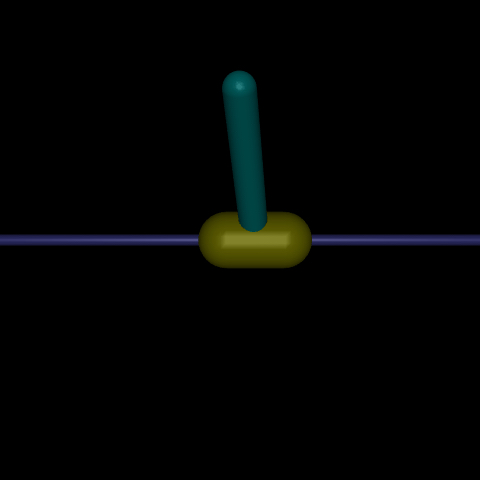}
        \caption{}
    \end{subfigure}

    \caption{Visual illustration of continuous control tasks in MuJoCo: (a) Ant-v3 (b) HalfCheetah-v3, (c) Walker2d-v3, (d) Hopper-v3, (e) Humanoid-v3, and (f) InvertedPendulum-v2.} 
\end{figure}

\newpage 

\subsection{Pseudo-code for D$^2$HPG} \label{pseudo-code}

Algorithm~\ref{algorithm:D2HPG} summarizes the principled formulation of D$^2$HPG. We introduce a temporary buffer $\mathcal{B}$ that stores the most recent observations, rewards, and executed action sequence, which enables the RL agent to construct augmented states with timely information. Note that the augmented reward $\tilde{r}_t$ can be empirically estimated from transition samples stored in the replay buffer $\mathcal{D}$, i.e.,
\begin{align}
    \nonumber
    \mathcal{R}_{\Delta}(x_{t}, a_{t})
    \approx \mathbb{E}_{\mathcal{D}}\left[\mathcal{R}(s_{t}, a_{t})\right].
\end{align}

As discussed in the main text, we use the delay-free MDP as a practical surrogate for the abstract MDP in near-deterministic regimes with mild stochasticity. Under this assumption, the abstract policy and critic are identified with the delay-free policy and critic, respectively. Consequently, we train the delay-free components directly from time-aligned transitions stored in the replay buffer and align the regular policy via the lifting loss, without learning the homomorphism map or the abstract dynamics models. In this practical instantiation, line~23 is omitted.

In addition, line~25 denotes an optional auxiliary update step for the regular policy. When this step is omitted, the resulting variant is D$^2$HPG-naive, where the regular policy is updated only through the lifting loss. In our main implementation, this auxiliary update is instantiated with BPQL, yielding D$^2$HPG-BPQL, which is referred to as D$^2$HPG in the main text for brevity. The comparison between D$^2$HPG-naive and D$^2$HPG-BPQL is provided in Appendix~\ref{d2hpg-variants}.

\begin{algorithm}[h] 
\caption{Deep delayed homomorphic policy gradient (D$^2$HPG)} \label{algorithm:D2HPG}
\begin{algorithmic}[1]
\STATE \textbf{Input:} policies $\pi^{\uparrow}_{\theta}, \bar{\pi}_{\theta'}$, critics $Q_{\phi_1}, Q_{\phi_2}$, target critics $Q_{\phi'_1}, Q_{\phi'_2}$, replay buffer $\mathcal{D}$, temporary buffer $\mathcal{B}$, delay $\Delta$, target smoothing coefficient $\beta$, replay warm-up size $N$, episodic length $H$, and total number of episodes $E$.
\STATE Initialize $\mathcal{B} \leftarrow \emptyset$, $\mathcal{D} \leftarrow \emptyset$
\FOR{episode $e=1$ to $E$}
\FOR{time step $t=1$ to $H$}
\IF{t $\leq \Delta$}
    \STATE select random action $a_t$  
    \STATE execute $a_t$ on environment
    \STATE put $a_t$, observed state, reward to $\mathcal{B}$
\ELSE
    \STATE get $s_{t-\Delta}, a_{t-\Delta}, a_{t-\Delta+1}, \dots, s_{t-1}$ from $\mathcal{B}$
    \STATE $x_{t} \leftarrow (s_{t-\Delta}, a_{t-\Delta}, a_{t-\Delta+1}, \dots, a_{t-1})$
    \STATE sample action $a_t \sim \pi^{\uparrow}_\theta(x_{t})$
    \STATE put $a_t$, observed state, reward to $\mathcal{B}$
    \IF{t $> 2\Delta$}
    \STATE get $s_{t-\Delta}, s_{t-\Delta+1}, s_{t-2\Delta}, s_{t-2\Delta+1}, a_{t-2\Delta}, a_{t-2\Delta+1}, \dots, a_{t-\Delta}, r_{t-\Delta}$ from $\mathcal{B}$ 
    \STATE $x_{t-\Delta} \leftarrow (s_{t-2\Delta}, a_{t-2\Delta}, a_{t-2\Delta+1}, \dots, a_{t-\Delta-1})$
    \STATE $x_{t-\Delta+1} \leftarrow (s_{t-2\Delta+1}, a_{t-2\Delta+1}, a_{t-2\Delta+2}, \dots, a_{t-\Delta})$
    \STATE $\mathcal{D} \leftarrow \mathcal{D} \cup (x_{t-\Delta}, s_{t-\Delta}, a_{t-\Delta}, r_{t-\Delta}, x_{t-\Delta+1}, s_{t-\Delta+1})$
    \ENDIF
\ENDIF
    \IF{$|\mathcal{D}| \ge N$}
    \STATE sample and permute a mini-batch $\mathcal{D}_i \sim \mathcal{D}$
    \STATE train homomorphism map $h_{\xi, \omega}$ and abstract dynamics model $\bar{\mathcal{P}}_\tau, \bar{\mathcal{R}}_\nu$ via $\mathcal{L}_\text{lax} + \mathcal{L}_\text{h}$ 
    \STATE train critics $Q_{\phi_1}$ and $Q_{\phi_2}$ via $\mathcal{L}_\text{regular-critic} + \mathcal{L}_\text{abstract-critic}$
    \STATE train regular policy $\pi_\theta^{\uparrow}$ via stochastic actor-critic algorithm {\color{gray}(optional)}
    \STATE train abstract policy $\bar{\pi}_{\theta'}$ via the homomorphic policy gradient
    \STATE align the policies $\pi_\theta^{\uparrow}$ and $\bar{\pi}_{\theta'}$ via $\mathcal{L}_\text{lift}$
    \STATE update target critics $Q_{\phi'_1}$ and $Q_{\phi'_2}$: $\phi'_1 \leftarrow \beta\phi_1 + (1-\beta)\phi'_1$, \; $\phi'_2 \leftarrow \beta\phi_2 + (1-\beta)\phi'_2$
    \ENDIF
\ENDFOR
\ENDFOR
\STATE \textbf{Output:}  policies $\pi^{\uparrow}_{\theta}, \bar{\pi}_{\theta'}$
\end{algorithmic}
\end{algorithm}

\newpage 

\section{Ablation Study} \label{ablation-study}

\subsection{Ablation Study on D$^2$HPG Variants} \label{d2hpg-variants}

We compare D$^2$HPG variants and representative delayed RL baselines across multiple MuJoCo benchmarks. Specifically, D$^2$HPG-naive learns the regular policy solely using the homomorphic policy gradient obtained from the abstract MDP, whereas D$^2$HPG-BPQL adopts BPQL as the regular-policy learner. As shown in Fig.~\ref{fig:variants}, D$^2$HPG-naive outperforms augmented SAC, indicating the effectiveness of learning from the abstract MDP. Moreover, D$^2$HPG-BPQL outperforms D$^2$HPG-naive, suggesting that incorporating a strong regular-policy learner provides additional benefits beyond relying solely on the homomorphic policy gradient. More importantly, D$^2$HPG-BPQL consistently outperforms BPQL across all tasks. The performance gap becomes particularly pronounced as the delay $\Delta$ increases, indicating that the proposed D$^2$HPG algorithm becomes increasingly beneficial as the sample-complexity burden induced by state-space explosion becomes more severe.

\begin{figure*}[h]
    \centering
    \begin{subfigure}[t]{1.0\textwidth}
        \centering
        \includegraphics[width=\textwidth]{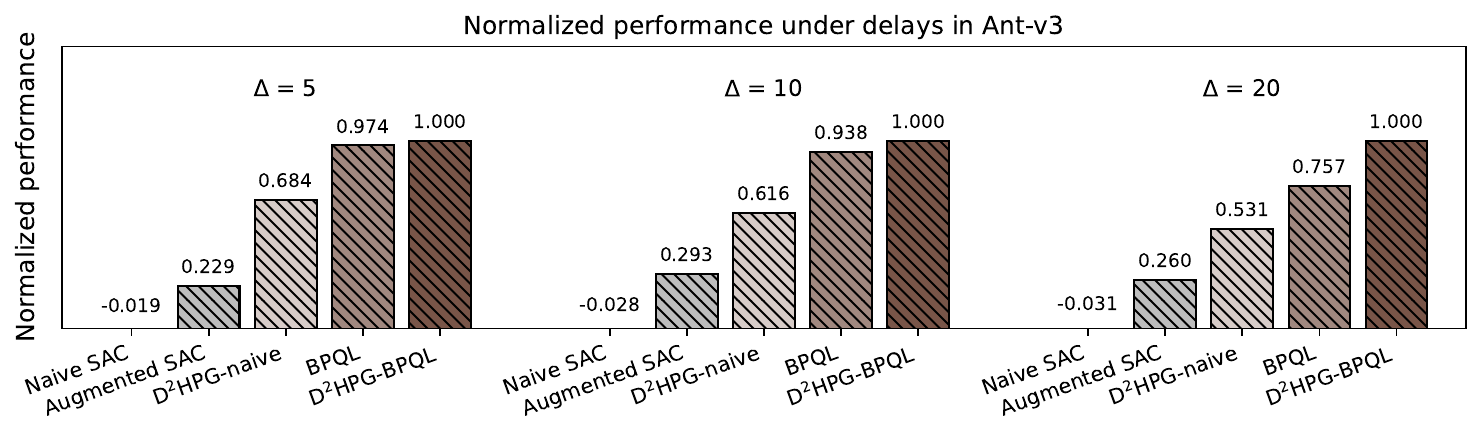}
        \caption{Ant-v3}
    \end{subfigure}
    \begin{subfigure}[t]{1.0\textwidth}
        \centering
        \includegraphics[width=\textwidth]{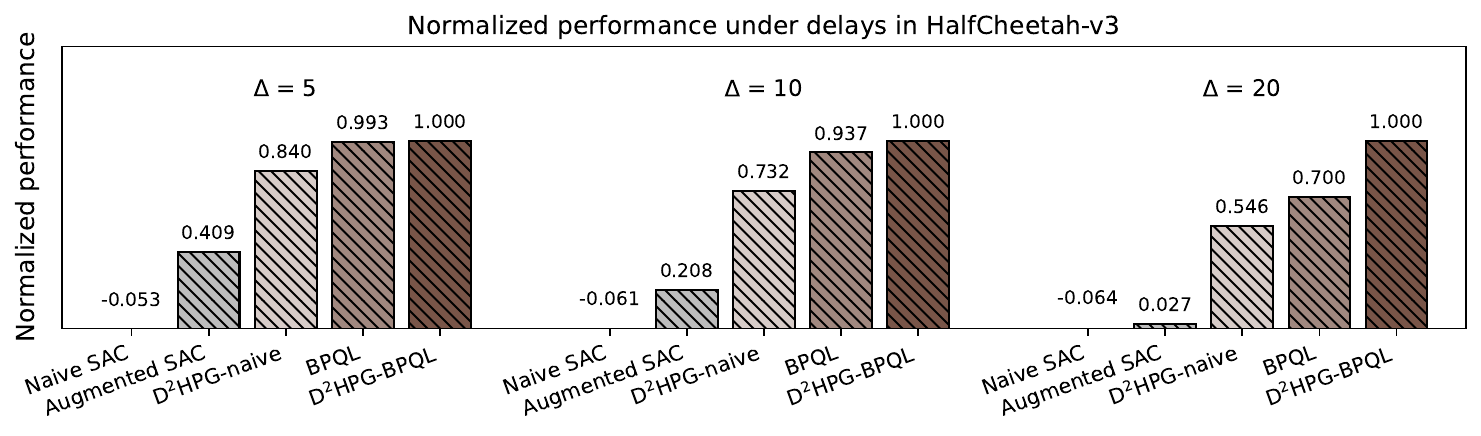}
        \caption{HalfCheetah-v3}
    \end{subfigure}    
    \begin{subfigure}[t]{1.0\textwidth}
        \centering
        \includegraphics[width=\textwidth]{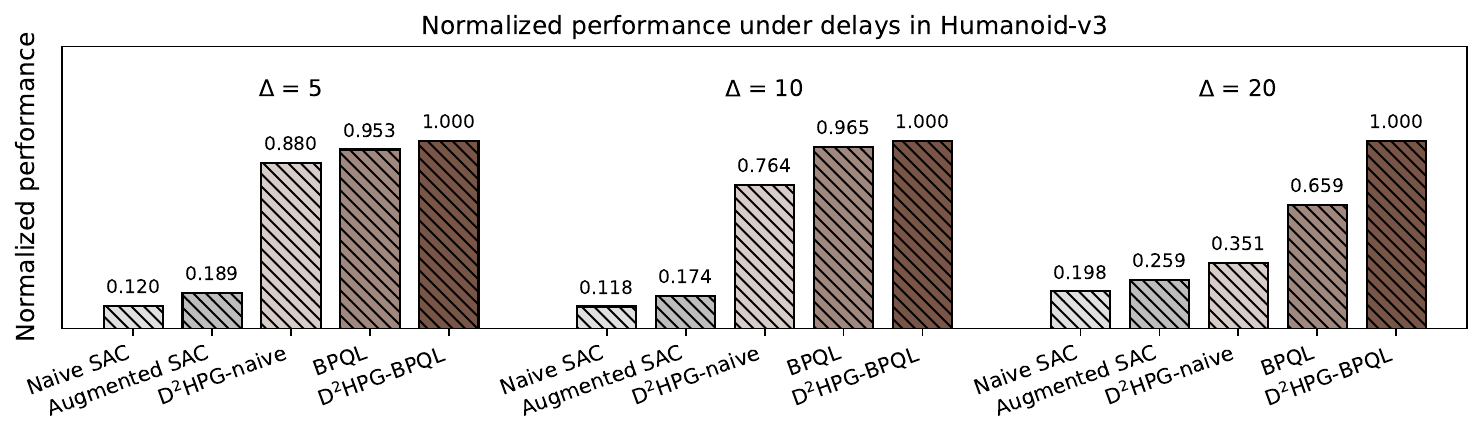}
        \caption{Humanoid-v3}
    \end{subfigure}    
    \caption{Normalized performance of D$^2$HPG variants in MuJoCo, where D$^2$HPG-BPQL is used as the baseline. Each algorithm was evaluated for one million time steps with five different seeds.}
    \label{fig:variants}
\end{figure*}

\newpage 

\subsection{Extension to Random Delays} \label{random-delays}

While the DHRL framework is formulated for fixed delays, it can be readily adapted to random-delay environments via the conservative-agent formulation of \citet{conservative-RL}. This formulation assumes that the random delay is bounded by a known maximum delay $\Delta_{\text{max}} \in \mathbb{N}$ and reformulates the environment as a constant-delay surrogate with fixed delay $\Delta_{\text{max}}$. Consequently, the conventional fixed-delay methods can be applied directly to random-delay settings without algorithmic modification.

To validate this extension empirically, we compare the performance of D$^2$HPG under random delays uniformly sampled from $\{1, 2, \dots,10\}$ against its performance under the fixed delay $\Delta = 10$. As shown in Fig.~\ref{random-delay}, the two settings exhibit nearly identical performance across multiple MuJoCo tasks. These results provide empirical support for applying D$^2$HPG to bounded random-delay environments via the conservative-agent formulation. We leave more comprehensive extensions to future work.

\begin{figure*}[h]
    \centering
    \begin{subfigure}[t]{0.95\textwidth}
        \centering
        \includegraphics[width=\textwidth]{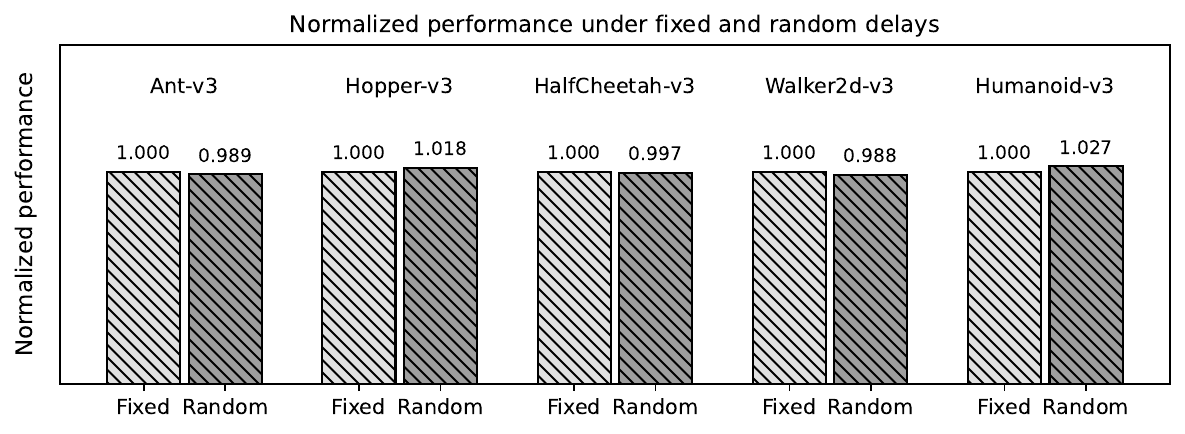}
    \end{subfigure}
    \caption{Normalized performance of D$^2$HPG under fixed delays with $\Delta = 10$ (fixed) and random delays uniformly sampled from $\{1,2,\dots,\Delta_{\text{max}}\}$ with $\Delta_{\text{max}} = 10$ (random), evaluated on multiple MuJoCo tasks over one million time steps with five random seeds.}
    \label{random-delay}
\end{figure*}

\subsection{Computational Overheads} \label{overheads}

We quantify the computational overhead of D$^2$HPG variants using wall-clock runtime and compare them with delayed RL baselines. Runtimes were measured over one million time steps in HalfCheetah-v3 on an NVIDIA RTX 3060 Ti GPU and an Intel i7-12700KF CPU, and the results are reported in Fig.~\ref{runtimes-fig}. Compared to BPQL, D$^2$HPG-BPQL incurs additional overhead, reflecting a trade-off between improved sample efficiency and increased computational cost. Nevertheless, D$^2$HPG-BPQL remains substantially time-efficient than the state-of-the-art VDPO algorithm.

\begin{figure*}[!h] 
    \centering
    \includegraphics[width=0.7\textwidth]{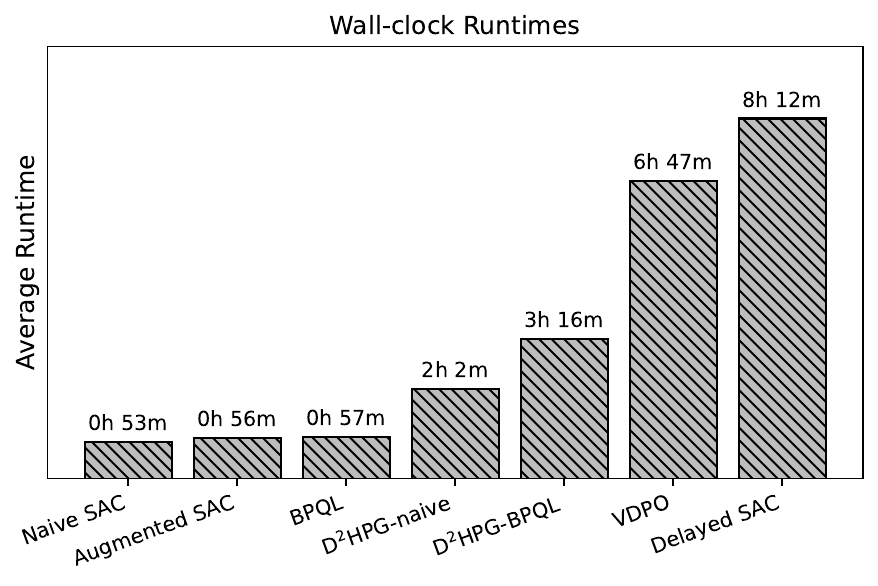}
    \caption{Wall-clock runtime comparison across baselines over one million time steps.} \label{runtimes-fig}
\end{figure*}

\newpage

\section{Full Results} \label{full-results}

In this section, we report the performance curves of each algorithm on the MuJoCo benchmarks under fixed delays $\Delta \in \{5,10,20\}$. As shown, VDPO achieves respectable performance on some tasks, but its behavior is highly task-dependent and becomes increasingly unstable as the delay grows. BPQL performs strongly under mild delays, but degrades substantially at $\Delta=20$. In contrast, D$^2$HPG remains stable across tasks and maintains strong performance even under long delays.

\begin{figure}[!h]
    \centering
    \includegraphics[width=1.0\textwidth]{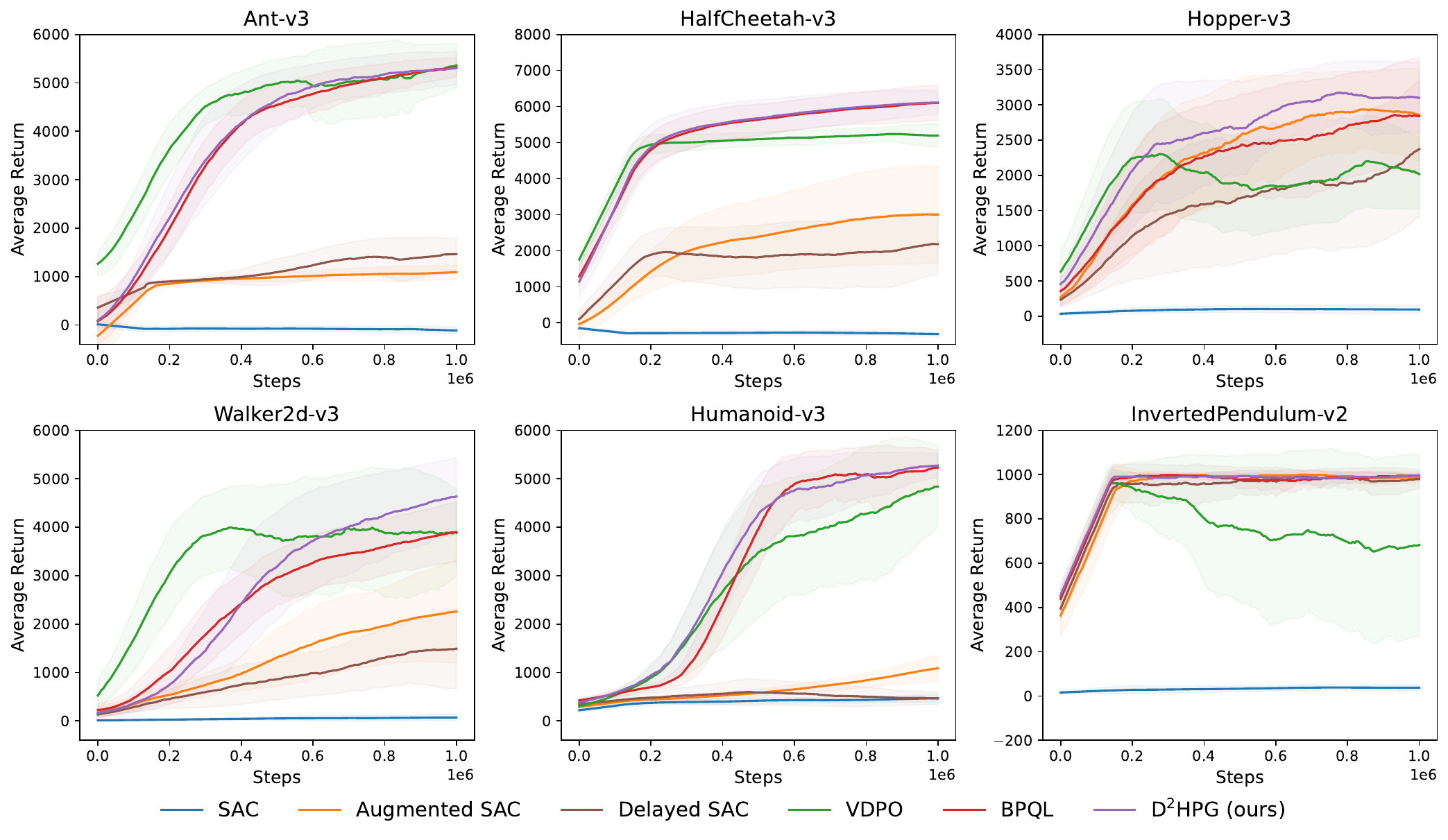} 
\caption{Performance curves of each algorithm on the MuJoCo benchmarks with $\Delta = 5$.}
\end{figure}

\begin{figure}[!h]
    \centering
    \includegraphics[width=1.0\textwidth]{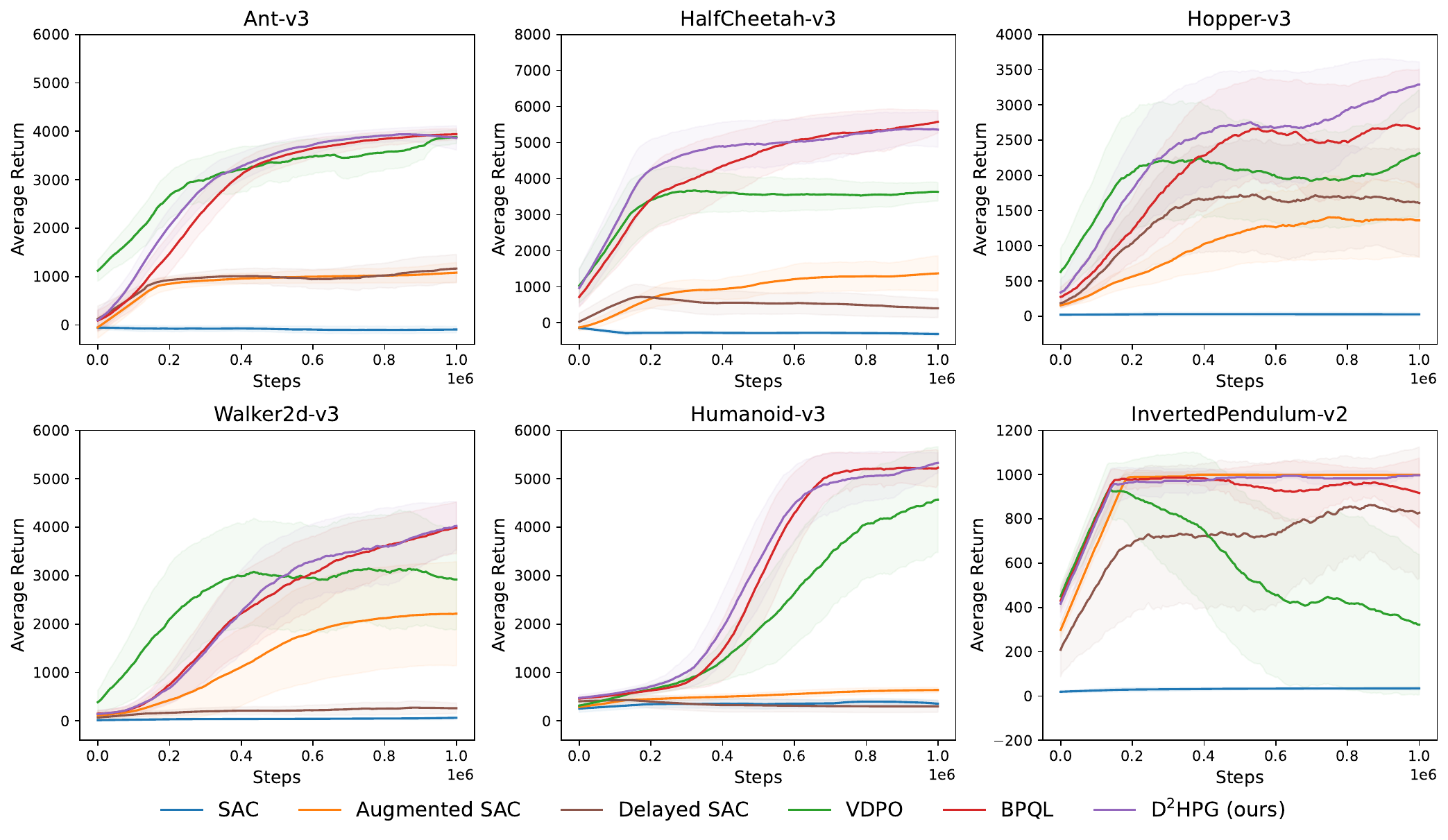}
\caption{Performance curves of each algorithm on the MuJoCo benchmarks with $\Delta = 10$.}
\end{figure}

\newpage 

\vfill

\begin{figure}[!h]
    \centering
    \includegraphics[width=1.0\textwidth]{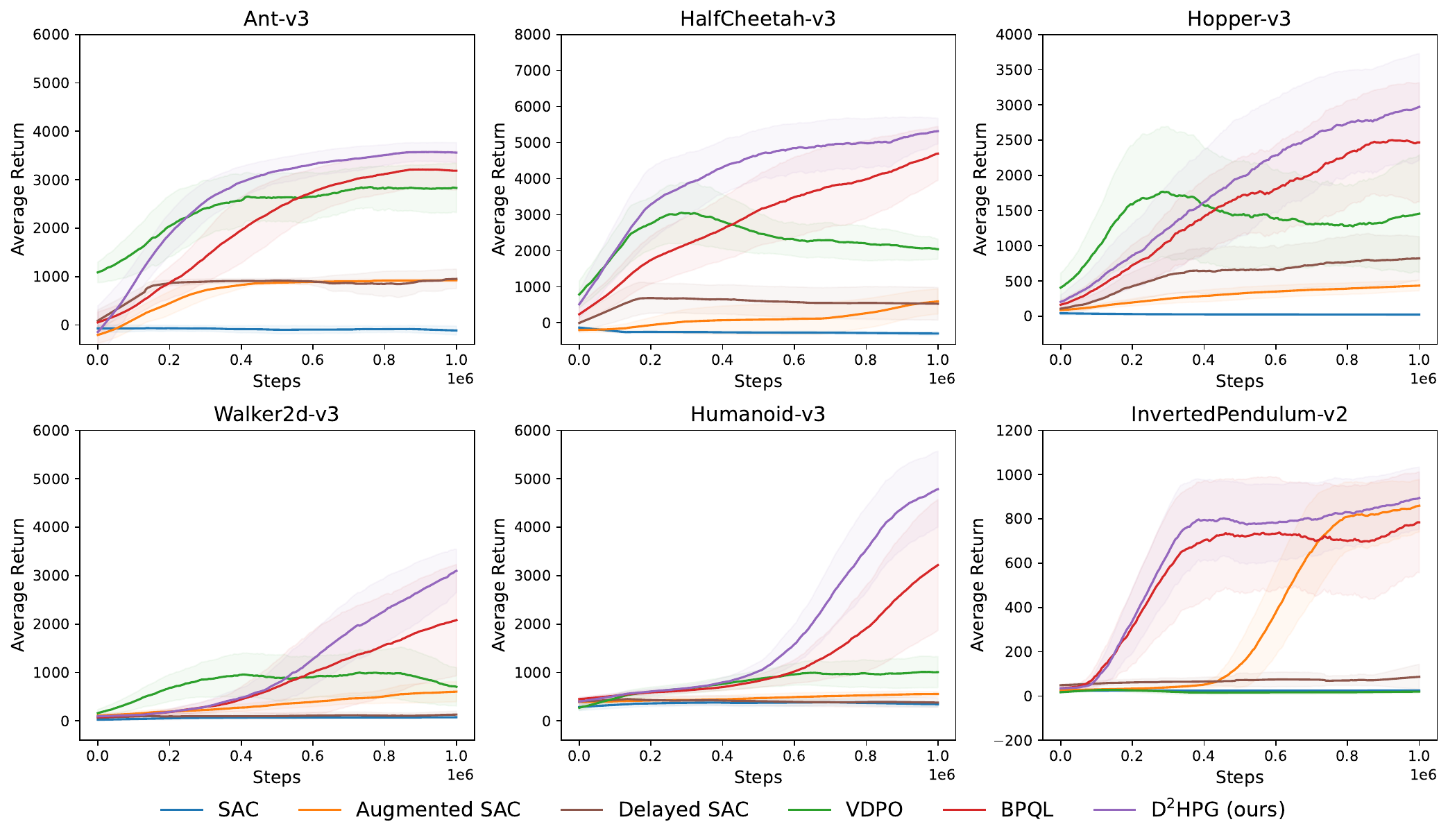} 
\caption{Performance curves of each algorithm on the MuJoCo benchmarks with $\Delta = 20$. }
\end{figure}
\vfill

\end{document}